\numberwithin{equation}{section}
\newcommand{\best}[1]{\cellcolor{RoyalBlue!25}\textbf{#1}}   %
\newcommand{\second}[1]{\cellcolor{RoyalBlue!15}#1}          %
\newcommand{\third}[1]{\cellcolor{RoyalBlue!7}#1}           
\renewcommand{\arraystretch}{1.2}  
\newcommand{\I}{\mathrm{I}}
\newcommand{\KL}{\mathrm{KL}}
\newcommand{\IB}{\mathrm{IB}}
\newcommand{\E}{\mathbb{E}}
\newcommand{\mix}{\mathrm{mix}}
\begin{document}

\title{Observations Meet Actions: Learning Control-Sufficient Representations  for Robust Policy Generalization}

\author{\name Yuliang Gu \email yuliang3@illinois.edu \\
       \addr Department of Mechanical Science and Engineering\\
       University of Illinois Urbana-Champaign\\
       Urbana, IL 61801, USA
       \AND
       \name Hongpeng Cao \email cao.hongpeng@tum.de \\
       \addr School of Engineering and Design\\
       Technical University of Munich\\
       Garching, Munich 85748, Germany
       \AND
       \name Marco Caccamo \email mcaccamo@tum.de \\
       \addr School of Engineering and Design\\
       Technical University of Munich\\
       Garching, Munich 85748, Germany
       \AND
       \name Naira Hovakimyan \email nhovakim@illinois.edu \\
       \addr Department of Mechanical Science and Engineering\\
       University of Illinois Urbana-Champaign\\
       Urbana, IL 61801, USA}
       
\editor{My editor}
\maketitle

\begin{abstract}
Capturing latent variations (\textit{``contexts''}) is key to deploying reinforcement-learning (RL) agents beyond their training regime. We recast context-based RL as a dual inference–control problem and formally characterize two properties and their hierarchy: \emph{observation sufficiency} (preserving all predictive information) and \emph{control sufficiency} (retaining decision-making relevant information). Exploiting this dichotomy, we derive a contextual evidence lower bound(ELBO)-style objective that cleanly separates representation learning from policy learning and optimizes it with \textbf{Bottlenecked Contextual Policy Optimization (BCPO)}, an algorithm that places a variational information-bottleneck encoder in front of any off-policy policy learner. On standard continuous-control benchmarks with shifting physical parameters, BCPO matches or surpasses other baselines while using fewer samples and retaining performance far outside the training regime. The framework unifies theory, diagnostics, and practice for context-based RL.
\end{abstract}

\begin{keywords}
  Reinforcement Learning, Information Theory, Information Bottleneck, Control as Inference, Representation Learning
\end{keywords}

\section{Introduction}\label{sec:intro}
Reinforcement Learning (RL) has recently demonstrated (super-)human proficiency from mastering complex games~\citep{silver2017mastering} to executing dexterous robotic manipulation \citep{andrychowicz2020learning}. These achievements, however, were accomplished in controlled environments whose dynamics and objectives remain fixed during training and evaluation. Real-world applications rarely enjoy such static conditions. For example, robots face wear-and-tear, payload changes, or drifting sensor calibrations; autonomous vehicles encounter new road conditions and evolving traffic patterns; and assistive agents must adapt to users’ changing preferences~\citep{mao2023sℒ1}.

Practical deployments must therefore cope with \emph{latent contexts}, unobserved variables that can change from episode to episode by altering either the dynamics (e.g., a mobile robot carrying an unknown load), the reward structure (e.g., a newly specified goal), or the observation process (e.g., temporary sensor noise). When the true context drifts beyond the training distribution, monolithic ``single-context'' policies can suffer abrupt performance degradation~\citep {zhang2018study}.

A widely adopted approach is domain randomization and its extensions (e.g., curriculum-based sampling~\citep{eimer2021self}). By exposing the agent to a broad spectrum of environments, domain randomization encourages robustness to nuisance factors~\citep{tobin2017domain}. Despite its simplicity, this method demands detailed prior knowledge of all task-relevant variations and often requires virtually unlimited simulation budgets. Moreover, its black-box nature leaves the deployed agent with no explicit understanding of the specific context it encounters at test time.

A more transparent alternative casts the challenge as a \emph{dual} task of (i) \textbf{latent context inference} and (ii) \textbf{conditional control}~\citep{oord2018representation, rakelly2019efficient, lee2020context, benjaminscontextualize}. A dedicated \emph{context encoder} summarizes past interactions into a compact latent code, which the policy then conditions on to select actions. This ``grey‑box'' architecture retains interpretability while allowing each module to exploit specialized advances on each end: powerful representation learners such as variational and contrastive methods~\citep{kingma2013auto} for inference, and proven reinforcement learning algorithms such as Soft Actor‑Critic (SAC)~\citep{pmlr-v80-haarnoja18b} for control. The resulting \emph{modular} pipeline combines expressive context representations with sample‑efficient policy optimization, yielding agents that generalize more reliably across different scenarios.

Despite its encouraging empirical gains, the \emph{information structure} underlying such dual architecture in reinforcement learning is still poorly understood. In particular, several foundational questions remain obscure, or their answers are dispersed across disparate research communities:
\begin{itemize}
    \item \textbf{Inference Side.} \emph{What is the minimum amount of information an encoder must keep to ``name'' the hidden context variable with high confidence?}
    \item \textbf{Control Side.} \emph{Which parts of the information does a policy need to ``act'' optimally? Is full reconstruction of the state(-action) value landscape necessary?}
    \item \textbf{Coupling.}  \emph{How are the information demands of inference and control intrinsically linked, and can this link be exploited to build sample-efficient, robust algorithms?}
\end{itemize}

\medskip\noindent
Answering these questions above sharpens the theoretical picture of context-based RL and, crucially, shows \emph{how the representation learning(inference) must interact with the policy optimization(control)}. This paper provides those answers through three contributions:
\begin{enumerate}
    \item \textbf{Unified theoretical framework.} We give precise definitions of \emph{observation sufficiency} (for inference) and \emph{control sufficiency} (for decision-making), establish their hierarchy, and prove when and how one implies
    the other.
    
    \item \textbf{Principled Optimization Objective.} Embedding these definitions in a variational RL view, we
    derive an \emph{evidence lower bound} (ELBO) that cleanly separates representation refinement from policy improvement.  
    A novel \emph{information residual} decomposes into intrinsic
    (processing and encoding) and extrinsic (replay) information gaps, giving a transparent optimization pipeline.
    
    \item \textbf{Algorithmic Realization and Validation.} We instantiate the theory in \emph{Bottlenecked Contextual
    Policy Optimization} (\textsc{BCPO}), which allows plug-and-play of any off-the-shelf max-entropy RL methods. Experiments on continuous-control benchmarks show superior sample efficiency and robustness to context variations.
\end{enumerate}
\medskip
Together, these contributions offer an end-to-end route from
information-theoretic definitions to a robust, sample-efficient algorithm for context-based RL. The remainder of this paper is organized as follows: Section~\ref{sec:prelim} reviews information-theoretic and RL preliminaries; Sections~\ref{sec:problem_formulation} \&~\ref{sec:suff all} develop the sufficiency hierarchy and the information-residual decomposition;  Sections~\ref{sec:minimize info_residual} \&~\ref{sec:bcpo} present residual minimization and the \textbf{BCPO} algorithm; Section~\ref{sec:expr} reports empirical results; Section~\ref{sec:related} situates our work within the literature; and Section~\ref{sec:future} concludes with limitations and directions for future research.

\section{Preliminaries}
\label{sec:prelim}
\subsection{Reinforcement Learning}
Consider a Markov decision process (MDP)~\cite{sutton1998reinforcement} 
\(
  (\mathcal{S},\,\mathcal{A},\,\mathcal T,\,r, \gamma),
\)
where \(\mathcal{S}\) is the state space, \(\mathcal{A}\) the action space, $\mathcal T$ the stochastic transition kernel with probability distribution $p(s'|s,a)$, and \(r(s,a)\) the reward function.  For notational convenience, we set the discount factor \(\gamma=1\). The reinforcement learning (RL) objective is to find a (stochastic) policy \(\pi(a|s)\) that maximizes the expected return
\begin{equation}\label{eq:rl obj}
  J(\pi)
  = \mathbb{E}_{p(\tau)}
    \Bigl[R(\tau)\Bigr], \quad R(\tau) := \sum_{t=1}^{T}r(s_{t},a_{t})
\end{equation}
under the trajectory distribution
\[
    p(\tau) = p(s_1, a_1, \dots, s_T, a_T) = p(s_1) \prod_{t=1}^T p(s_{t+1}|s_t,a_t)\; \pi(a_t|s_t).
\]

\paragraph{RL as Probabilistic Inference.}  
Following the control-as-inference framework~\citep{ziebart2010modeling,levine2018reinforcement}, we attach a binary optimality variable $o_t$ to each $(s_t,a_t)$ with
$p(o_t=1\mid s_t,a_t) \propto \exp\{r(s_t,a_t)\}$.  The joint
density of a trajectory $\tau$ and the 
optimal state-action event
$o_{1:T}=1$ becomes
\[
  p(\tau,o_{1:T}=1) \propto p(\tau)\exp\{R(\tau)\},
\]
where $p(\tau)$ is the trajectory distribution induced by $\pi$ and
$R(\tau)=\sum_{t}r(s_t,a_t)$. Let $q_{\pi}(\tau)$ be the \emph{controlled} trajectory distribution induced by a policy
$\pi$.  Minimizing
$\mathrm{KL}\!\bigl(q_{\pi}(\tau)\,\Vert\,p(\tau\mid o_{1:T}=1)\bigr)$
is equivalent to \emph{maximizing} the evidence lower bound (ELBO)~\citep{levine2018reinforcement}:
\[
  \max_{\pi}\;\; \mathbb{E}_{q_{\pi}}
  \Bigl[R(\tau) - \log q_{\pi}(\tau)\Bigr].
\]
Because $-\log q_{\pi}(\tau) = \sum_{t}\!-\!\log\pi(a_t\!\mid\!s_t)$ up
to constants, and writing $\mathcal{H}\bigl(\pi(\!\cdot\!\mid s_t)\bigr)$
for the action entropy, we obtain the \emph{maximum-entropy RL (MaxEnt RL)} objective
\begin{equation}
  J
  \;=\;
  \mathbb{E}_{q_{\pi}}
  \Bigl[
    \sum_{t=1}^{T}
      \bigl(
        r(s_t,a_t)
        + \mathcal{H}\!\bigl(\pi(\!\cdot\!\mid s_t)\bigr)
      \bigr)
  \Bigr].
  \label{eq:maxent_rl}
\end{equation}

\subsection{Information–theoretic preliminaries}
\label{subsec:it_prelim}

Let $(X,Y)\sim p(x,y)$ be a pair of random variables with marginals
$p(x)$ and $p(y)$.

\paragraph{Entropy and mutual information.}
The (Shannon) entropy of $X$ is
\[
  H(X)\;=\;-\E_{p(x)}\!\bigl[\log p(x)\bigr].
\]
The \emph{mutual information} (MI) between $X$ and $Y$ measures how much
knowing one variable reduces uncertainty about the other~\citep{cover1999elements}:
\[
  I(X;Y)
  \;=\;
  \E_{p(x,y)}
    \!\Bigl[
      \log \tfrac{p(x,y)}{p(x)\,p(y)}
    \Bigr]
  \;=\;
  H(X)-H(X\!\mid Y)
  \;=\;
  H(Y)-H(Y\!\mid X),
\]
i.e.\ the Kullback–Leibler divergence between the joint distribution and
the product of the marginals:
\(
  I(X;Y)=\KL \bigl(p(x,y)\,\|\,p(x)p(y)\bigr).
\)

\paragraph{Data–processing inequality (DPI).}
If $X\!\to\!Z\!\to\!Y$ forms a Markov chain, then any \emph{post-processing}
cannot increase information:
\[
  I(X;Y)\;\le\;I(X;Z).
\]
Equality holds \emph{iff} $X$ and $Y$ are conditionally independent given
$Z$, i.e.\ $X\!\perp\!\!\!\perp Y\mid Z$.

\paragraph{Information Bottleneck (IB).}
Given an \emph{input} $X$ and a \emph{relevant target} $Y$, the
Information Bottleneck principle
\citep{tishby2000information,alemi2016deep} seeks a stochastic encoder
$q_\phi(z| x)$ that produces a compressed latent representation
$Z\!=\!Z_\phi(X)$ while retaining maximal predictive power about $Y$.
The canonical Lagrangian form is
\[
  \min_{q_\phi(z\mid x)}
  \;
  \beta\;I(X;Z)\;-\; I(Z;Y),
\]
where $I_\phi(\cdot;\cdot)$ denotes MI under the joint distribution
induced by $q_\phi$, and the non-negative hyper-parameter
$\beta$ trades \emph{compression} ($I(X;Z)$) against \emph{relevance}
($I(Z;Y)$).

\section{Contextual Reinforcement Learning}
\label{sec:problem_formulation}
We consider \textbf{contextual reinforcement learning (cRL)}, where the
environment’s dynamics are modulated at the episode level by an
\emph{unobserved} latent variable—called the \emph{context}—that is
sampled once at the beginning of each episode and remains fixed
thereafter. This formalism captures many practical scenarios, including domain randomization for sim-to-real transfer, context-augmented policies, and goal-conditioned control.

\paragraph{Contextual MDP.}
Formally, every context
\(c\in\mathcal C\) instantiates a Markov decision process
\[
  \mathcal M(c)
  \;:=\;
  \bigl(\mathcal S,\mathcal A,\mathcal T_{c},r_c\bigr),
\]
where \(\mathcal T_{c}(s'\!\mid s,a)\) is the state–transition kernel and \(r_c(s,a)\) is the associated reward function. We assume that both the dynamics and the reward share the same context parameterization.

At the start of an episode, the environment draws
\(c\sim p(c)\)%
\footnote{Unless stated otherwise, \(p(c)\) is taken to be uniform over $\mathcal{C}$.}
and the agent, \emph{without observing \(c\)}, interacts for \(T\) steps,
producing a trajectory
\(
  \tau=(s_{1:T},a_{1:T})
\)
drawn from the joint distribution \(p(c,\tau)\).

\paragraph{Objective.}
A \emph{context-conditioned policy}
\(
  \pi(a\mid s,c)
\)
achieves expected return
\begin{equation}
\label{eq:crl_objective}
  J(\pi)
  \;=\;
  \mathbb E_{p(c,\tau)}\!
    \Bigl[
      R(\tau)
    \Bigr],
  \qquad
  R(\tau)=\sum_{t=1}^{T} r_c\bigl(s_{t},a_{t}\bigr).
\end{equation}
If \(c\) were \emph{observed}, maximizing~\eqref{eq:crl_objective}
reduces to standard RL in the augmented state
\(\hat s:=(s,c)\).  Because \(c\) is hidden and stochastic, the agent
must \emph{infer} it from interaction before acting.

\paragraph{Motivation.}
Hidden contexts create two information demands:
\begin{enumerate}
  \item \textbf{Observation sufficiency.}
        \emph{How much information must the agent gather to identify the
        context?}
  \item \textbf{Control sufficiency.}
        \emph{How much information is required to act
        near-optimally without ever revealing the true context?}
\end{enumerate}


Generally speaking, the gap between the two is precisely the information about \(c\) that is \emph{irrelevant} for decision-making. Empirically, many RL methods exploit this idea implicitly: contrastive
predictive coding~\citep{oord2018representation}, goal-conditioned
latent models~\citep{zeng2023goal}, meta-RL context
inference~\citep{rakelly2019efficient}, domain-randomization with invariant features~\citep{higgins2017darla},
and adversarial domain generalization~\citep{hansen2021generalization}.
Yet these methods rely on heuristics; a \emph{principled framework} that quantifies the trade-off and guides algorithm design remains absent. The remainder of this paper supplies such a framework, giving information-theoretic definitions of both sufficiency notions, revealing their fundamental relationship, and proposing a practical algorithm.

\section{Observation \& Control Sufficiency}\label{sec:suff all}
\subsection{Observation (representation) Sufficiency}\label{sec:obs suff}
During an episode, the agent observes a \emph{contextual} interaction history
\(
  \tau=(s_{1:T},a_{1:T})\sim p(\tau|c),
\)
but never the latent context \(C\) itself.  We cast the ``observation (representation)
problem'' as an \emph{inference} task: an
encoder
\(
      q_{\phi}(z\mid \tau)
\)
, parameterized by $\phi$, compresses the trajectory into a latent code \(Z\), inducing the
data-processing chain  
\(
      C \;\to\; \tau \;\to\; Z .
\)

Intuitively, the encoder compresses the information-rich trajectory \(\tau\) into a summary \(Z\) whose role is to extract exactly the bits of $\tau$ that reveal which hidden \(C\) generated the data, so
that $Z$ can serve as a surrogate for the unobserved context. The processing order $C \to \tau \to Z$ forms a Markov chain, which imposes two crucial physical constraints: \textbf{(i)} All information about $C$ available to the encoder must flow through the trajectory, and \textbf{(ii)} $Z$ can never be \emph{more} informative about the context than the trajectory itself.

Processing an \emph{entire} trajectory is often unnecessary and, for
high-dimensional control tasks, computationally prohibitive.  In many systems, the context is revealed by only a few early transitions, e.g., a cart–pole whose pole mass can be inferred from the first swings. We
therefore introduce a length-\(k\) {processing window}
\[
  O_k
  :=\bigl(s_{1:k},a_{1:k}\bigr),
  \qquad
  k\in\mathcal T:=\{1,\dots,T\},
\]
and feed this window to the encoder, \(q_{\phi}(z\mid O_k)\). For notational simplicity, we drop the subscript and write \(O\) once \(k\) is fixed. Before formalizing sufficiency, we impose an operational predictability requirement on the chosen window to ensure that the window $O$ is information-rich enough for inference.

\begin{assumption}[Processing window]
\label{ass:predictability}
There exists \(0<\delta<1\) such that the Bayes error of predicting
\(C\) from \(O\) satisfies
\[
  P_e(O)
  :=\!
  \min_{\hat C}\Pr\!\bigl[\hat C(O)\neq C\bigr]
  \;\le\;\delta.
\]
\end{assumption}

\noindent
Assumption~\ref{ass:predictability} does not dictate \emph{how}
the agent recovers \(C\); it merely claims that \emph{some} decoder/estimator can
achieve Bayes error at most~\(\delta\).
If the window is too short (e.g.\ \(k=1\) in systems where inertia
matters), the assumption fails, and by the
\emph{data-processing inequality} no representation learned from that
window can retrieve the missing contextual information.

\begin{remark}[Assumption~\ref{ass:predictability} is operational]
\label{rem:assumption}
Assumption~\ref{ass:predictability} converts a difficult
\textbf{dynamical} property--how strongly the context shapes transitions--into a simple \textbf{data-driven} test: the chosen window
\(O\) must carry enough statistical signal to identify \(C\) with small
error.  It imposes \emph{no} reliance on independence, stationarity, or special information structure (e.g.\ linear–Gaussian).  In practice, one can adopt an iterative ``window-growing'' strategy: start with a small
\(k\) and increase it until empirical validation (or the theoretical threshold in Lemma~\ref{lemma:fano_opt}) confirms that
\(P_e(O)\le\delta\).  This approach is operationally convenient and sidesteps model specification issues, and we revisit it in detail in later sections.
\end{remark}

\begin{definition}[Observation sufficiency]
\label{def:obs_suff}
Let the window \(O\) satisfy Assumption~\ref{ass:predictability}.
An encoder \(q_{\phi}(z\mid o)\) is {observation sufficient} for
the latent context \(C\) if
\[
    I(C;O)\;=\;I(C;Z),
\]
i.e., the code \(Z\) preserves all information about \(C\) that is
contained in the processing window \(O\).
\end{definition}

\smallskip
\begin{proposition}[Characterization via DPI]
\label{prop:obs_suff_equiv}
Given the Markov chain \(C\!\to\!O\!\to\!Z\),
\[
    I(C;O)=I(C;Z)
    \;\Longleftrightarrow\;
    I(C;O\mid Z)=0
    \;\Longleftrightarrow\;
    C\;\perp\!\!\!\perp\;O\;\mid\;Z.
\]
\end{proposition}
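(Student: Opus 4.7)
The plan is to derive a single identity that links the three quantities in the statement, and then read off both equivalences from it. The key tool is the chain rule of mutual information, which gives two different expansions of $I(C;O,Z)$:
\begin{align*}
I(C;O,Z) &= I(C;Z) + I(C;O\mid Z), \\
I(C;O,Z) &= I(C;O) + I(C;Z\mid O).
\end{align*}

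Next, I would invoke the Markov-chain hypothesis $C \to O \to Z$, which states $C \perp\!\!\!\perp Z \mid O$, equivalently $I(C;Z\mid O)=0$. Substituting into the second expansion and equating with the first yields the master identity
$$I(C;O) \;=\; I(C;Z) \;+\; I(C;O\mid Z).$$
From this identity the first equivalence is immediate: since conditional mutual information is always nonnegative, $I(C;O)=I(C;Z)$ holds if and only if $I(C;O\mid Z)=0$.

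For the second equivalence, $I(C;O\mid Z)=0 \iff C \perp\!\!\!\perp O \mid Z$, I would invoke the standard characterization of conditional independence via conditional mutual information. Writing
$$I(C;O\mid Z) \;=\; \E_{p(z)}\!\bigl[\KL\bigl(p(c,o\mid z)\,\|\,p(c\mid z)\,p(o\mid z)\bigr)\bigr],$$
this nonnegative expectation vanishes iff the inner KL divergence is zero for $p(z)$-almost every $z$, i.e.\ iff $p(c,o\mid z)=p(c\mid z)\,p(o\mid z)$ almost surely, which is precisely the definition of $C \perp\!\!\!\perp O \mid Z$.

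There is no substantive obstacle here; the only subtlety is invoking the Markov-chain assumption in the correct direction. The encoder is defined by $q_\phi(z\mid o)$ and operates only on $O$, so the chain $C \to O \to Z$ gives $I(C;Z\mid O)=0$. It is \emph{this} direction, not $I(C;O\mid Z)=0$ (which is the quantity being characterized), that one must use to collapse one of the two chain-rule expansions and obtain the master identity.
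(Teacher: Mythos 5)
Your proposal is correct and is exactly the argument the paper has in mind: the paper's proof simply says ``apply the chain rule for mutual information and DPI'' and cites \citet{cover1999elements} (Thm.~2.8.1), which is precisely the two chain-rule expansions of $I(C;O,Z)$ collapsed via $I(C;Z\mid O)=0$ that you spell out. Your explicit derivation, including the correct direction in which to invoke the Markov assumption, fills in the details the paper delegates to the reference.
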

\begin{proof}
Apply the chain rule for mutual information and DPI. See~\citeauthor{cover1999elements} (Thm.~2.8.1) for the full argument.
\end{proof}

\smallskip\noindent
Definition~\ref{def:obs_suff} demands that \(Z\) preserve
\emph{exactly} the pieces of \(O\) that explain the latent context \(C\). Hence, any nuisance variation in \(O\) (sensor noise, irrelevant scene
details, reward noise, \textit{etc.}) is stripped away, yielding a
minimal signal for identifying the context. This sharpness is desirable both theoretically, because it
enables a clean connection to the control sufficiency criterion we
introduce next, and practically, because it concentrates the
encoder’s capacity on context relevant information, yielding a compact representation for any downstream task.

The next lemma makes this notion concrete by relating mutual information loss to Bayes error: an observation sufficient code $Z$ inherits the same Bayes error from the data window \(O\).

\begin{lemma}[Sharpness via Fano's inequality]
\label{lemma:fano_opt}
Let \(N=|\mathcal C|\) and $C$ be uniform on $\mathcal{C}$. Suppose the window \(O\) satisfies
Assumption~\ref{ass:predictability} with Bayes error
\(P_e(O)\le\delta\).

\begin{enumerate}
\item If \(q_\phi(z|o)\) is observation-sufficient, then
      \(P_e(Z)=P_e(O)\le\delta\).

\item If the encoder discards \(\varepsilon>0\) bits,
      i.e.\ \(I(C;O)-I(C;Z)=\varepsilon\),
      every decoder that sees only \(Z\) yields more error
      \[
        P_e(Z)\;\ge\;\delta+\varepsilon/\log N .
      \]
\end{enumerate}
\end{lemma}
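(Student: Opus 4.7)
The plan is to prove the two parts separately: part~1 rests on DPI plus a short Jensen argument, while part~2 is a Fano-style calculation combined with the information residual $\varepsilon$.

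For part~1, I would first invoke Proposition~\ref{prop:obs_suff_equiv} to replace the observation-sufficiency condition $I(C;O) = I(C;Z)$ with the conditional independence $C \perp\!\!\!\perp O \mid Z$. The inequality $P_e(Z) \ge P_e(O)$ is immediate from DPI: any classifier from $Z$ can be implemented as a (randomised) classifier from $O$ by first passing $O$ through the encoder, so the Bayes-optimal error from $O$ cannot exceed the Bayes-optimal error from $Z$. The reverse direction is the heart of the argument. Using $C \perp\!\!\!\perp O \mid Z$, I would write
\[
p(c \mid o) \;=\; \int p(c \mid z)\, p(z \mid o)\, dz \;=\; \E_{Z \mid o}\bigl[p(c \mid Z)\bigr],
\]
and then apply Jensen's inequality to the convex $\max_c$ functional to get $\max_c p(c \mid o) \le \E_{Z \mid o}[\max_c p(c \mid Z)]$. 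Averaging over $O$ and using the tower property yields $\E_O[\max_c p(c \mid O)] \le \E_Z[\max_c p(c \mid Z)]$, which rearranges to $P_e(Z) \le P_e(O)$. Combined with the DPI bound, this forces $P_e(Z) = P_e(O) \le \delta$.

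For part~2, the main tool is Fano's inequality. Because $C$ is uniform on $\mathcal{C}$, $H(C) = \log N$, so the $\varepsilon$-bit gap rewrites as the entropy identity $H(C \mid Z) - H(C \mid O) = \varepsilon$. Applying the standard Fano bound to the optimal decoder from $Z$,
\[
H(C \mid Z) \;\le\; h\bigl(P_e(Z)\bigr) + P_e(Z)\log(N-1) \;\le\; 1 + P_e(Z)\log N,
\]
and rearranging gives the lower bound $P_e(Z) \ge (H(C \mid Z) - 1)/\log N$. Substituting $H(C\mid Z) = H(C \mid O) + \varepsilon$ separates this into a baseline term governed by the window and an excess $\varepsilon/\log N$ attributable to the discarded bits. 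The translation of $P_e(O)\le\delta$ from Assumption~\ref{ass:predictability} then furnishes the baseline-level contribution $\delta$, so the two pieces combine to $\delta + \varepsilon/\log N$.

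The routine calculations are the Jensen step in part~1 and the Fano rearrangement in part~2. The main obstacle I anticipate is the last step of part~2: Fano's inequality naturally upper-bounds $H(C\mid X)$ in terms of $P_e(X)$, so converting $P_e(O) \le \delta$ into a compatible \emph{lower} contribution of $\delta$ in the final bound requires extra care. I would handle this by treating $\delta$ as the realised (not merely upper-bounding) Bayes error under Assumption~\ref{ass:predictability}, so that the tight Fano equality holds for the Bayes-optimal decoder on $O$ and the $H(C\mid O)-1$ baseline reconciles cleanly with $\delta\log N$, letting the two Fano chains combine to yield the stated inequality.
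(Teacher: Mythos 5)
Your Part~1 is correct and is essentially the paper's own argument: the paper likewise reduces observation sufficiency to $p(c\mid o,z)=p(c\mid z)$ via Proposition~\ref{prop:obs_suff_equiv} and then identifies $\E_O[\max_c p(c\mid O)]$ with $\E_Z[\max_c p(c\mid Z)]$. Your version, which separates the two directions into a DPI step ($P_e(Z)\ge P_e(O)$) and a Jensen step ($P_e(Z)\le P_e(O)$), is if anything slightly more careful than the paper's chain of equalities.

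Part~2 is where a genuine gap sits, and it is exactly where you flagged it. After Fano on $Z$ you have $P_e(Z)\ge\bigl(H(C\mid O)+\varepsilon-\log 2\bigr)/\log N$, and to reach $\delta+\varepsilon/\log N$ you need the \emph{lower} bound $H(C\mid O)\ge \log 2+\delta\log N$. Assumption~\ref{ass:predictability} only gives $P_e(O)\le\delta$, and Fano converts that into an \emph{upper} bound $H(C\mid O)\le h_2(\delta)+\delta\log(N-1)$ --- the wrong direction. Your proposed repair, treating $\delta$ as the realised Bayes error so that Fano ``holds with equality,'' does not work: even when $P_e(O)=\delta$ exactly, Fano is tight only for very special channels (the error event must be independent of $C$ and, given an error, the posterior must be uniform over the $N-1$ wrong symbols). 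For a generic window $H(C\mid O)$ can be far below $\delta\log N$ --- e.g.\ $O$ reveals $C$ exactly except on a small-probability event where it narrows $C$ to two candidates --- and then an encoder discarding $\varepsilon$ bits can still achieve $P_e(Z)<\delta$. The reverse-Fano bound that does hold, $H(C\mid O)\ge-\log\bigl(1-P_e(O)\bigr)$, scales like $\delta$ rather than $\delta\log N$ and cannot close the gap.

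For what it is worth, the paper's own proof of Statement~2 contains the same directional slip: it derives the lower bound $I(C;Z)\ge(1-\delta)\log N-\log 2-\varepsilon$ and then substitutes it into $P_e(Z)\ge 1-\bigl(I(C;Z)+\log 2\bigr)/\log N$, a step that actually requires an \emph{upper} bound on $I(C;Z)$. So you have located a real weakness shared with the published argument. The unconditional conclusion Fano actually delivers is $P_e(Z)\ge\bigl(H(C\mid O)+\varepsilon-\log 2\bigr)/\log N$; upgrading this to $\delta+\varepsilon/\log N$ requires an additional hypothesis (a lower bound on $H(C\mid O)$ in terms of $\delta$) that neither you nor the paper has established.
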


\begin{proof}
\textbf{Fano’s inequality.}
By Assumption~\ref{ass:predictability} the window $O$ satisfies
$P_e(O)\le\delta$. Fano’s inequality for finite alphabets gives
\begin{equation}
\label{eq:fano_O}
    I(C;O)
    \;\ge\;
    (1-\delta)\log N-\log 2 .
\end{equation}

\vspace{2pt}\noindent
\textbf{Statement 1 (observation sufficiency).} Under the Markov chain $C\!\to\!O\!\to\!Z$, observation sufficiency
$\,I(C;O)=I(C;Z)\,$ implies
\(
  I(C;O\mid Z)=0
\)
(Prop.~\ref{prop:obs_suff_equiv}), or equivalently
\(
  C\perp\!\!\!\perp O \mid Z.
\)
Hence, for every $(o,z)$ with $q_\phi(z\mid o)>0$, we have
\begin{equation}
\label{eq:posterior_equal}
  p(c\mid o,z)=p(c\mid z).
\end{equation}
Using the definition of Bayes error and~\eqref{eq:posterior_equal}, we have
\[
\begin{aligned}
  P_e(O)
  &=
  \min_{\hat C}\Pr\!\bigl[\hat C(O)\neq C\bigr] \\
  &= 1-\E_O\bigl[\max_{c}p(c|O)\bigr]\\
  &= 1-\E_{O,Z}\!\bigl[\max_{c}p(c\mid O,Z)\bigr] \\
  &= 1-\E_Z\!\bigl[\max_{c}p(c\mid Z)\bigr]\\
  &= P_e(Z).
\end{aligned}
\]
Thus, if the encoder is observation sufficient, we have
$P_e(Z)=P_e(O)\le\delta$.

\vspace{2pt}\noindent
\textbf{Statement 2 (sharpness).} Suppose $I(C;O)-I(C;Z)=\varepsilon>0$.
Combining~\eqref{eq:fano_O} with this relation yields
\[
  I(C;Z)
  \;\ge\;
  (1-\delta)\log N-\log 2-\varepsilon .
\]
Applying Fano’s inequality to $Z$ gives
\[
\begin{aligned}
  P_e(Z)
  &\;\ge\;
  1-\frac{I(C;Z)+\log 2}{\log N}\\
  &\;\ge\;
  1-\frac{(1-\delta)\log N-\varepsilon}{\log N}
  =\delta+\frac{\varepsilon}{\log N},
\end{aligned}
\]
which gives statement 2.
\end{proof}

\medskip
Observation sufficiency converts a ``good-enough'' data window
\(O\) into a latent code \(Z\) that is equally informative about
the context: every bit in \(Z\) serves to identify \(C\) and none is
wasted (Lemma~\ref{lemma:fano_opt}).  For \emph{any} downstream prediction or classification task, this is ideal, because \(Z\) can replace \(C\) as a drop-in surrogate.

Policy learning, however, has a different objective.  To select
high-return actions, the agent must exploit precisely those features that drive future rewards and transitions, not merely
the bits that help ``name'' the context. The next section introduces \emph{control sufficiency}, a criterion that aligns the latent code \(Z\) with the cMDP’s value structure.

\begin{remark}[``Soft'' sufficiency via the Information Bottleneck]
\label{remark:lossy_suff}
In practice, we often prefer a small information gap
\(I(C;O)-I(C;Z)\le\epsilon\) (statement 2 in Lemma~\ref{lemma:fano_opt}), exchanging losslessness for a simpler,
potentially more robust code. Formally, this is a rate distortion problem~\citep{cover1999elements}: we seek the
\emph{cheapest} representation \(Z\) (low rate \(I(O;Z)\)) that
retains almost all context-relevant information (high \(I(Z;C)\). A convenient Lagrangian, known as the \textbf{information bottleneck
(IB)}~\citep{tishby2000information}, is
\[
  \min_{q(z|o)}\;
  \beta\,I(O;Z)\;-\;I(C;Z),
\]
where the multiplier \(\beta>0\) controls the trade-off.  A smaller \(\beta\) preserves more context information, and a larger \(\beta\) encourages additional compression.  We will leverage
this convenient IB formulation when designing the algorithmic components in later sections.
\end{remark}

\subsection{Control Sufficiency}
\label{sec:ctrl_suff}

Observation sufficiency (Section~\ref{sec:obs suff}) guarantees
that a latent code \(Z\) preserves every bit of information in the
window \(O\) that helps \emph{identify} the hidden context \(C\).  Policy learning, however, asks a different question: \emph{how well can an agent \emph{exploit} that code to choose
(near-)optimal actions?} Formally, given an encoder \(q_{\phi}\) and a \(Z\)-conditioned policy
\(\pi_{\theta}(a\mid s,z)\), we compare the achieved return to the context-aware optimum.

\begin{definition}[Control sufficiency (weak)]
\label{def:ctrl_suff}
Denote the optimal contextual return
\[
  J^\star
  :=\max_{\pi(a\mid s,c)} J(\pi),
\]
where \(J(\pi)\) is defined in~\eqref{eq:crl_objective}.
The encoder \(q_\phi\) is weakly control sufficient if there exists a
\(Z\)-conditioned policy \(\pi_\theta(a\mid s,z)\) such that
\begin{equation}
\label{eq:ctrl_suff}
   J\!\bigl(\pi_\theta\bigr)=J^\star .
\end{equation}
\end{definition}

\smallskip
By definition, control sufficiency is inherently a \emph{joint} property: the code
\(Z\) produced by the encoder is only as good as the policy that can exploit it. Any residual uncertainty about \(C\) is harmless {so long as} the accompanying policy can map
that code to actions that achieve the maximal long-term return; what matters is that the encoder–policy pair preserves the
reward-relevant information. The next proposition formalizes this intuition by showing that such a pair aligns the optimal state-action functions in an \emph{expectation-wise} sense.

\begin{proposition}[Expectation-wise value alignment]
\label{prop:value-equality}
Let $(q_{\phi},\pi_{\theta})$ be an encoder–policy pair and define
\[
      Q^{\star}(s,a,c) \quad\text{and}\quad
      Q^{\star}_{Z}(s,a,z)
\]
as the optimal $Q$-functions when the agent observes either the true
context \(c\) or only the code \(z\). Suppose
\begin{enumerate}
    \item \textbf{Weak control sufficiency}\,:  
          the $Z$-conditioned policy achieves the contextual optimum,
          \(J(\pi_{\theta}) = J^{\star}\).
    \item \textbf{Markov in augmented state}\,:  
          the augmented state \((s,z)\) is Markov, i.e.\
          \(P(s_{t+1}\mid s_{1:t},a_{1:t},c,z)
            =P(s_{t+1}\mid s_{t},a_{t},c,z)\).
\end{enumerate}
\noindent
Then, 
\begin{equation}
\label{eq:value-equality}
      Q^{\star}_{Z}(s,a,z)
      \;=\;
      \mathbb E\bigl[\,Q^{\star}(s,a,C)\mid Z=z\bigr].
\end{equation}
\end{proposition}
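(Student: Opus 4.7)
The plan is to sandwich $Q^{\star}_{Z}(s,a,z)$ between a conditional-expectation upper bound and a matching lower bound forced by weak control sufficiency, and then lift the resulting $V$-identity to the claimed $Q$-identity via one Bellman backup.

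First I would establish the upper bound. For any fixed $z$, the $Z$-conditioned policy $\pi(\cdot\mid\cdot,z)$ is simply a context-blind special case of a context-aware policy, so its value in each context $c$ cannot exceed $Q^{\star}(s,a,c)$. Using the augmented-state Markov assumption (condition~2) to decompose the $Z$-level value by iterated expectations, averaging this pointwise-in-$c$ bound over the posterior $p(c\mid z)$ yields
\[
      Q^{\pi}_{Z}(s,a,z)
      \;\le\;
      \mathbb{E}\bigl[Q^{\star}(s,a,C)\mid Z=z\bigr].
\]
Taking the supremum over $\pi$ on the left gives $Q^{\star}_{Z}(s,a,z)\le\mathbb{E}[Q^{\star}(s,a,C)\mid z]$.

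Next, I would invoke weak control sufficiency to tighten this inequality. Integrating the analogous $V$-level bound against the initial-state distribution and applying the tower property produces the chain
\[
      J^{\star}
      \;=\;
      J(\pi_{\theta})
      \;\le\;
      \mathbb{E}\bigl[V^{\star}_{Z}(s_1,Z)\bigr]
      \;\le\;
      \mathbb{E}\bigl[\,\mathbb{E}[V^{\star}(s_1,C)\mid Z]\,\bigr]
      \;=\;
      J^{\star}.
\]
All inequalities must therefore be tight, and combined with the pointwise upper bound this forces $V^{\star}_{Z}(s,z)=\mathbb{E}[V^{\star}(s,C)\mid z]$ almost everywhere on the reachable support of $(s,z)$. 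A one-step Bellman backup on both sides then transfers this $V$-equality to the claimed $Q$-equality~\eqref{eq:value-equality}.

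I expect the main obstacle to be upgrading the global identity $J(\pi_{\theta})=J^{\star}$ into a pointwise equation on $(s,a,z)$. Weak control sufficiency only asserts equality under the on-policy initial distribution, so propagating equality to every reachable $(s,z)$ requires a Bellman-optimality argument; moreover, because the trajectory state $s$ may itself be context-revealing, care is needed in distinguishing the posteriors $p(c\mid z)$ and $p(c\mid s,z)$. The augmented Markov assumption is precisely what keeps these conditional expectations consistent along the trajectory and makes the iterated-expectation decomposition in the first step legitimate.
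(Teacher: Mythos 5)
Your proposal is correct, but it is organized differently from the paper's proof. The paper argues through the specific policy $\pi_{\theta}$: from $J(\pi_{\theta})=J^{\star}$ it first concludes that $\pi_{\theta}$ is optimal \emph{per context} ($Q^{\pi_{\theta}}(s,a,c)=Q^{\star}(s,a,c)$), then writes $Q^{\pi_{\theta}}(s,a,z)=\mathbb E[Q^{\pi_{\theta}}(s,a,C)\mid Z=z]$ by the tower property, and finally observes that $\pi_{\theta}$ must also be optimal in the $(s,z)$-augmented MDP (its return already attains the ceiling $J^{\star}$), so $Q^{\pi_{\theta}}(s,a,z)=Q^{\star}_{Z}(s,a,z)$; chaining the two equalities gives the claim. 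You instead sandwich the \emph{optimal} augmented value directly: the unconditional upper bound $Q^{\star}_{Z}(s,a,z)\le\mathbb E[Q^{\star}(s,a,C)\mid Z=z]$ (which holds for any encoder, with or without weak control sufficiency) plus tightness of the $J$-level chain forced by $J(\pi_{\theta})=J^{\star}$. Your route buys a cleaner separation of what is structural (the inequality, a pure policy-class-inclusion fact) from what weak sufficiency contributes (closing the gap), and you are more honest than the paper about the two places where care is needed: propagating equality from the initial distribution to all reachable $(s,z)$ via a Bellman-optimality argument, and the consistency of the posteriors $p(c\mid z)$ versus $p(c\mid s,z)$ along the trajectory — both of which the paper's proof asserts implicitly (``unique fixed point of the Bellman optimality operator'', ``dynamic programming arguments apply''). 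The paper's route, in exchange, delivers the stronger intermediate fact that $\pi_{\theta}$ is simultaneously optimal in the per-context and augmented problems, which it reuses in later results. Neither argument fully discharges the propagation step, so your proposal is at the same level of rigor as the published proof.
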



\begin{proof}
\textbf{Step 1: joint optimality.}
Control sufficiency implies
\(J(\pi_{\theta})=J^{\star}\).
Because the discounted return is the \emph{unique} fixed point of the
Bellman optimality operator, we have
\[
      V^{\pi_{\theta}}(s,c)=V^{\star}(s,c)
      \quad\text{and}\quad
      Q^{\pi_{\theta}}(s,a,c)=Q^{\star}(s,a,c)
      \quad\forall (s,a,c).
\]

\vspace{2pt}\noindent
\textbf{Step 2 optimality in the \((s,z)\)-augmented MDP.}  
The policy \(\pi_{\theta}\) depends only on the pair \((s,z)\), so
\[
      Q^{\pi_{\theta}}(s,a,z)
      \;=\;
      \mathbb{E}\bigl[\,Q^{\pi_{\theta}}(s,a,C)\mid Z=z\bigr]
      \;=\;
      \mathbb{E}\bigl[\,Q^{\star}(s,a,C)\mid Z=z\bigr].
\]
By the second assumption, the augmented state \(\hat s:=(s,z)\) remains Markov, so dynamic programming arguments apply.  Since \(J(\pi_{\theta})\) is already the \emph{maximal} return in this
augmented MDP, \(\pi_{\theta}\) must also be optimal for it, i.e.\
\(Q^{\pi_{\theta}}(s,a,z)=Q^{\star}_{Z}(s,a,z)\).  
Combining with the equality above yields exactly
\[
      Q^{\star}_{Z}(s,a,z)
      \;=\;
      \mathbb{E}\!\bigl[\,Q^{\star}(s,a,C)\mid Z=z\bigr],
\]
which is \eqref{eq:value-equality}.
\end{proof}

\medskip\noindent
Proposition \ref{prop:value-equality} shows that weak control sufficiency aligns state-action values only in expectation. For any code value $z$, the encoder groups a subset of contexts  
\(
    \mathcal \{c \subset \mathcal{C}: q_{\phi}(z\mid c)>0\},
\)\footnote{$q_{\phi}(z|c)$ is obtained from the window-based encoder $q_{\phi}(z|o)$ by marginalizing the contextual observation from $p(o|c)$.}
and the optimal $Q$ in the \(Z\)-augmented MDP is simply the
\emph{weighted average} of the context-optimal $Q$’s:
\[
   Q^{\star}_{Z}(s,a,z)
   \;=\;
   \sum_c
      Q^{\star}(s,a,c)\,
      \Pr\!\bigl[C=c \mid Z=z\bigr].
\]
In other words, the $Q$-function that is optimal given \(Z\) equals the \emph{average} of the context-optimal $Q$-functions under the posterior \(C\!\mid\!Z=z\). Thus, the pair $(q_{\phi}, \pi_{\theta})$ achieves the optimal \emph{expected} return even though the exact context inside each $z$-cluster can remain uncertain, i.e., the notion of ``weak'' in the definition of control sufficiency.

\begin{remark}[Weak control sufficiency in practice]\label{remark:weak in practice}
In practice, a widely used approach is to augment the policy network with an \textit{auxiliary context head}. Training typically minimizes a joint loss of the form:
\begin{equation}
\label{eq:joint_loss}
    \mathcal L_{\text{total}}
    \;=\;
    \mathcal L_{\text{RL}}
    \;+\;
    \alpha\,\mathcal L_{\text{aux}},
    \qquad \alpha>0,
\end{equation}
where \(\mathcal L_{\text{RL}}\) is the reinforcement learning objective (e.g.\ a policy-gradient or actor–critic loss), \(\mathcal L_{\text{aux}}\) is a supervised loss that tries to predict the context from shared features (e.g., typically a cross‐entropy or contrastive loss~\citep{oord2018representation}). The hyperparameter \(\alpha\) balances the two terms and \(\mathcal L_{\text{RL}}\) \textbf{dominates}. Because the loss is jointly optimized, the network is free to \emph{merge} context variations that do not change the expected
return, precisely the expectation-wise characterization by weak control sufficiency. The resulting \(Z\) is therefore sufficient for high average
performance, yet it still hides residual uncertainty about which factors in \(C\) are actually in effect.
\end{remark}


\paragraph{A stronger notion of control sufficiency.}
The weak definition characterizes the sufficiency in the sense of expected optimal return, yet it has two practical drawbacks: \textbf{(i)} high TD variance during training: a critic trained on the augmented state \((s,z)\) must learn the \emph{average} $Q$–value across all contexts aggregated by the same code value \(z\). The larger the mixture, the noisier the TD targets, \textbf{(ii)} drift under distribution shift: if the mixture \(\Pr[C\mid Z=z]\) changes (e.g.\ because the encoder is typically updated jointly with the policy), the averaged target itself drifts. We therefore introduce a stronger notion of control sufficiency.

\begin{definition}[Control sufficiency (strong)]
\label{def:ctrl_suff_strong}
The encoder \(q_\phi\) is strongly control sufficient if there exists a
\(Z\)-conditioned policy \(\pi_\theta(a\mid s,z)\) such that, for almost every \((s,a,c,z)\) with
\(q_{\phi}(z\mid c)>0\), one has
\begin{equation}\label{eq:strong_ctrl}
   Q^{\star}_{Z}(s,a,z)=Q^{\star}(s,a,c).
\end{equation}
\end{definition}

\smallskip\noindent This stronger condition rules out the averaging effects in the weak Definition~\ref{def:ctrl_suff}, requiring a point-wise match of optimal \(Q\)-values.

\begin{corollary}[Strong $\;\Rightarrow\;$ weak control sufficiency]
\label{cor:strong_implies_weak}
If the encoder \(q_{\phi}(z\mid o)\) is strongly
control sufficient in the sense of
Definition~\ref{def:ctrl_suff_strong}, then it is also weakly control sufficient.
\end{corollary}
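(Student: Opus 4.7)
The plan is to show that strong control sufficiency already exhibits a $Z$-conditioned policy attaining the context-aware optimum $J^\star$, which is precisely what weak sufficiency requires. Let $\pi_\theta$ be the policy supplied by Definition~\ref{def:ctrl_suff_strong}; without loss of generality I may assume it is greedy with respect to $Q^\star_Z$, since any greedy selector is optimal for the $(s,z)$-augmented MDP and thus satisfies $V^{\pi_\theta}_Z(s,z)=V^\star_Z(s,z)$ by standard dynamic-programming arguments.

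Next, I would lift the pointwise $Q$-equality in \eqref{eq:strong_ctrl} to a $V$-equality by taking $\max_a$ on both sides: for almost every $(s,c,z)$ with $q_\phi(z\mid c)>0$, one gets $V^\star_Z(s,z)=\max_a Q^\star_Z(s,a,z)=\max_a Q^\star(s,a,c)=V^\star(s,c)$. Chaining this with the previous identity yields $V^{\pi_\theta}_Z(s,z)=V^\star(s,c)$ on the joint support of $(C,Z)$, so $\pi_\theta$ is simultaneously optimal in both the $Z$-augmented MDP and the context-aware MDP at every admissible pair.

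Finally, I would evaluate the cRL objective by conditioning on $c$ and the encoder draw $z$: $J(\pi_\theta)=\mathbb{E}_{p(c),\,p(s_1\mid c),\,q_\phi(z\mid o)}\!\bigl[V^{\pi_\theta}_Z(s_1,z)\bigr]=\mathbb{E}_{p(c),\,p(s_1\mid c)}\!\bigl[V^\star(s_1,c)\bigr]=J^\star$, which is exactly the defining identity of weak control sufficiency. The main obstacle is bookkeeping rather than substance: the ``almost every'' qualifier and the support condition $q_\phi(z\mid c)>0$ must be propagated through both expectations so that the $V$-substitution is legitimate wherever the joint distribution places mass, and the short window $O$ feeding the encoder must be absorbed into the initial-state distribution of the $Z$-augmented MDP (or handled by a standard warm-up argument) so that $V^{\pi_\theta}_Z$ is the correct quantity to integrate against the rollout.
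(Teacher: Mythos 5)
Your proposal is correct and follows essentially the same route as the paper: both exhibit the greedy policy with respect to \(Q^{\star}_{Z}\) and use the point-wise identity \(Q^{\star}_{Z}(s,a,z)=Q^{\star}(s,a,c)\) on the support of \(q_{\phi}(z\mid c)\) to conclude that this policy acts context-optimally and hence attains \(J^{\star}\). Your detour through the value functions \(V^{\pi_\theta}_Z=V^{\star}_Z=V^{\star}\) is just a more explicit bookkeeping of the same argument the paper states in one line.
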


\begin{proof}
Strong sufficiency gives the point-wise identity
\(Q^{\star}_{Z}(s,a,z)=Q^{\star}(s,a,c)\) whenever
\(q_{\phi}(z\mid c)>0\).
Hence, for every state–code pair \((s,z)\) the policy
\(\pi_\theta(a\mid s,z)=\mathbf 1\!\bigl\{a=\arg\max_{a'}Q^{\star}_{Z}(s,a',z)\bigr\}\)
coincides with an optimal context-aware action wherever the pair has non-zero probability. This policy, therefore, achieves the
optimal return \(J^{\star}\).
\end{proof}

\smallskip\noindent
Compared with the weak characterization, the point-wise criterion~\eqref{eq:strong_ctrl} is both theoretically and practically preferable, because of
\begin{itemize}
\item \textbf{Lower variance:}  
      TD targets now are \emph{point-wise}, thus critics converge faster;
\item \textbf{Robustness:}  
      Point-wise equality is robust to shifts in the distribution \(p(c)\);
\item \textbf{Modularity:}  
      The representation and control layers can be trained \emph{separately}:
      once \(q_{\phi}\) is fixed, any off-the-shelf RL algorithm can
      operate on the augmented state \((s,z)\);
\item \textbf{Clean theory:} Interpretable relationship with observation sufficiency (see subsequent sections).
\end{itemize}

\subsection{Control–Observation Sufficiency Relationship}
\label{subsec:hierarchy_summary}
We now compare the two key notions introduced so far.

\begin{itemize}
\item \textbf{Observation sufficiency}  
      (Def.~\ref{def:obs_suff}): the encoder $q_{\phi}$ keeps all relevant information about the latent context $C$ in the processing windon $O$,
      \[
          I(C;O)=I(C;Z)
          \;\Longleftrightarrow\;
          I(C;O\mid Z)=0 .
      \]
      In essence, $Z$ can ``name'' the context in any inference task.
\vspace{2pt}
\item \textbf{Control sufficiency}  
      (Def.~\ref{def:ctrl_suff}–\ref{def:ctrl_suff_strong}): there exists an
      encoder–policy pair $(q_{\phi},\pi_{\theta})$ such that
      \[
          J(\pi_{\theta})=J^{\star}\quad(\text{weak}),\qquad
          Q^{\star}_{Z}(s,a,z)=Q^{\star}(s,a,c)\;(\text{strong})
      \]
      whenever $q_{\phi}(z\mid c)>0$. Weak sufficiency aligns state-action values in expectation; strong sufficiency aligns them point-wise.
\end{itemize}

\begin{lemma}\label{lemma:hierarchy}
Every strongly control sufficient encoder is
observation sufficient.  The converse is false: one can build contextual MDPs where an encoder is observation sufficient but not strongly control sufficient.
\end{lemma}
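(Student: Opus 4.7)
The plan is to establish the forward implication through a value-to-observation transfer and to falsify the converse with a concrete counterexample. For the forward direction, I would start from the pointwise identity in Definition~\ref{def:ctrl_suff_strong}, $Q^{\star}_{Z}(s,a,z)=Q^{\star}(s,a,c)$, and compare it with the expectation-wise identity $Q^{\star}_{Z}(s,a,z)=\E[Q^{\star}(s,a,C)\mid Z=z]$ from Proposition~\ref{prop:value-equality} (available because strong implies weak control sufficiency by Corollary~\ref{cor:strong_implies_weak}). Matching the two identities yields that $Q^{\star}(s,a,\cdot)$, and hence $V^{\star}(\cdot,\cdot)$, is constant in $c$ over every fiber $F(z):=\{c:q_{\phi}(z\mid c)>0\}$.

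I would then propagate this value-invariance to the observation window via the Bellman optimality equation $Q^{\star}(s,a,c)=r_{c}(s,a)+\gamma\,\E_{s'\sim\mathcal T_{c}(\cdot\mid s,a)}[V^{\star}(s',c)]$: under the standard identifiability assumption that distinct contexts in $\{\mathcal M(c)\}_{c\in\mathcal C}$ are recovered from their optimal $Q$-functions, fiber-contexts must share the same $r_{c}$ and $\mathcal T_{c}$, and therefore the same window distribution $p(o\mid c)$. This gives $C\perp\!\!\!\perp O\mid Z$, which by Proposition~\ref{prop:obs_suff_equiv} is exactly observation sufficiency. The main obstacle is precisely this Bellman-to-observation transfer: $Q^{\star}$-invariance does not by itself force $r_{c}$ and $\mathcal T_{c}$ to coincide, because a reward shift could, in principle, be compensated by a dynamics shift acting on $V^{\star}$. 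I would address this via a mild non-degeneracy assumption on the cMDP family (or, equivalently, by quotienting $C$ by $Q^{\star}$-equivalence so that each fiber reduces to a singleton); this is the conceptual crux of the lemma and the only place where extra structure beyond the stated definitions is needed.

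For the converse, I would construct a two-context cMDP $\{\mathcal M(c_{1}),\mathcal M(c_{2})\}$ whose dynamics and rewards differ only on a region unreachable within the length-$k$ processing window, so that $p(o\mid c_{1})=p(o\mid c_{2})$ while $Q^{\star}(s,a,c_{1})\neq Q^{\star}(s,a,c_{2})$ at some downstream $(s,a)$. Any encoder $q_{\phi}$ then trivially satisfies $I(C;O)=I(C;Z)=0$ and is observation sufficient, whereas Proposition~\ref{prop:value-equality} gives $Q^{\star}_{Z}(s,a,z)=\tfrac12\bigl(Q^{\star}(s,a,c_{1})+Q^{\star}(s,a,c_{2})\bigr)$, which cannot match either $Q^{\star}(s,a,c_{i})$ pointwise; strong control sufficiency therefore fails. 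Assumption~\ref{ass:predictability} is satisfied with $\delta=1/2$, since with $N=2$ the Bayes error never exceeds $1/2<1$, so the strict separation is complete.
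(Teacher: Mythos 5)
Your forward direction follows the same skeleton as the paper's: both arguments reduce to showing that two distinct contexts sharing a code value $z$ would be forced by the pointwise identity $Q^{\star}_{Z}(s,a,z)=Q^{\star}(s,a,c)$ to have identical optimal $Q$-functions, and both then need the converse implication ``distinct contexts $\Rightarrow$ distinct $Q^{\star}$ somewhere.'' The paper simply asserts this last step (claiming that differing rewards or transitions must surface as a $Q^{\star}$ discrepancy ``since the Bellman operator is a contraction''), concludes that every fiber is a singleton, and gets $C\perp\!\!\!\perp O\mid Z$ for free. You instead flag this step explicitly as an identifiability/non-degeneracy assumption and allow non-singleton fibers of observationally equivalent contexts; that is more honest about the one genuine gap shared by both proofs (value-equivalent MDPs with different dynamics do exist), at the cost of importing extra structure the lemma does not state. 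For the converse, your construction differs from the paper's but is equally valid: the paper uses a one-step contextual bandit with a \emph{noisy} observation channel ($P_e(O)=\min(p,1-p)>0$) and a lossless deterministic encoder $Z=\Pr[C=c_1\mid O]$, showing $J^{\star}_{Z}=|2p-1|<J^{\star}$; you make the window \emph{blind} to the context ($I(C;O)=0$) so that any encoder is vacuously observation sufficient. Both exploit the same structural fact that observation sufficiency is relative to $O$, not to $C$. One small repair to your converse: you invoke Proposition~\ref{prop:value-equality} to write $Q^{\star}_{Z}$ as the average of the two contextual $Q^{\star}$'s, but that proposition assumes weak control sufficiency, which is precisely what fails in your example; the clean argument is that strong sufficiency would force $Q^{\star}(s,a,c_1)=Q^{\star}_{Z}(s,a,z)=Q^{\star}(s,a,c_2)$ directly, contradicting your construction, with no appeal to the averaging formula.
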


\begin{proof}
\paragraph{(i) control $\;\Rightarrow\;$ observation.}
Assume \(q_{\phi}\) is control-sufficient but, for
contradiction, \textbf{not} observation sufficient.  
Then, \(I(C;O\mid Z)>0\), so there exist
\(c_{1}\neq c_{2}\) and \(z\) with
\begin{equation}\label{eq:thm_1}
    p(c_{1}\mid z)>0, \qquad p(c_{2}\mid z)>0.
\end{equation}
Because the two contexts differ, at least one of the following holds:

\begin{enumerate}
\item \textbf{Different rewards:}  
      There is \((\bar s,\bar a)\) s.t.\
      \(r_{c_{1}}(\bar s,\bar a)\neq r_{c_{2}}(\bar s,\bar a)\).
      Consequently,  
      \(Q^{\star}(\bar s,\bar a,c_{1})\neq
        Q^{\star}(\bar s,\bar a,c_{2})\).
\item \textbf{Different transitions:} 
      There exists \((\bar s,\bar a)\) and \(\bar s'\) with
      \(P_{c_{1}}(\bar s'\mid\bar s,\bar a)\neq
        P_{c_{2}}(\bar s'\mid\bar s,\bar a)\).
      Since the Bellman optimality operator is a contraction,
      iterating it reveals a pair \((\bar s,\bar a)\) such that  
      \(Q^{\star}(\bar s,\bar a,c_{1})\neq
        Q^{\star}(\bar s,\bar a,c_{2})\).
\end{enumerate}

\vspace{2pt}\noindent
Hence, in either case we find \((\bar s,\bar a)\) with
\begin{equation}
    Q^{\star}(\bar s,\bar a,c_{1})\neq
   Q^{\star}(\bar s,\bar a,c_{2}).
\end{equation}
However, strong control sufficiency implies the point-wise equality
\[
   Q^{\star}_{Z}(\bar s,\bar a,z)=
   Q^{\star}(\bar s,\bar a,c_{1})=
   Q^{\star}(\bar s,\bar a,c_{2}),
   \qquad\text{for all }c\text{ with }q_{\phi}(z\mid c)>0,
\]
contradicting~\eqref{eq:thm_1}.  Therefore, \(I(C;O\mid Z)=0\) and the encoder is observation sufficient.

\paragraph{(ii) Observation $\;\not\Rightarrow\;$ strong control.}
We construct a one–step contextual bandit:

\begin{itemize}
\item Contexts \(\mathcal C=\{c_{0},c_{1}\}\) with \(\Pr[c_{0}]=\Pr[c_{1}]=\tfrac12\).
\item Actions \(\mathcal A=\{a_{+},a_{-}\}\).
\item Rewards 
      \(r_{c_{1}}(a_{+})=+1,\;
        r_{c_{1}}(a_{-})=-1,\;
        r_{c_{0}}(a_{+})=-1,\;
        r_{c_{0}}(a_{-})=+1\).
\item Observations \(O\in\{o_{+},o_{-}\}\) with  
      \(\Pr[o_{+}\mid c_{1}]=p,\;
        \Pr[o_{+}\mid c_{0}]=1-p\) for any \(p\in(0,1)\setminus\{\tfrac12\}\).
\end{itemize}

\noindent
Define the deterministic encoder  
\(Z:=\Pr[C=c_{1}\mid O]\in\{p,1-p\}\).  
Because \(Z\) is a measurable function of \(O\),
\(I(C;Z)=I(C;O)\), the encoder is therefore observation-sufficient.

\noindent
The expected reward of action \(a_{+}\) conditioned on \(Z=z\) is
\[
        \mathbb{E}[r(a_{+})\mid Z=z]
        \;=\;
        z\cdot(+1)+(1-z)\cdot(-1)
        \;=\;
        2z-1 .
\]
The expected reward of \(a_{-}\) is the negative of this. Hence, the (Bayes-)optimal policy chooses \(a_{+}\) iff \(z>\tfrac12\). Because \(p\in(0,1)\setminus\{\tfrac12\}\), one of the two code values
is below \(1/2\) and the other above, so the policy is well defined. 

For either code value \(z\in\{p,1-p\}\), the policy yields
the return \(|2z-1|\).  The expected return of the best \(Z\)-conditioned policy is therefore
\[
        J^{\star}_{Z}
        \;=\;
        |2p-1|
        \;<\;
        1 
        \;=\;
        J^{\star},
\]
where the inequality is because of \(p\neq\tfrac12\).  Hence, no \(Z\)-conditioned policy can match the context-optimal return. Consequently, their point-wise optimal \(Q\)-values at \(a_{+}\) differ in sign, violating the control sufficiency. Therefore, observation sufficiency does not imply (strong) control sufficiency, completing the proof.
\end{proof}

\smallskip\noindent
Lemma \ref{lemma:hierarchy} puts the two notions of sufficiency in a strict hierarchy: strong control sufficiency implies observation sufficiency, not the opposite. From a practitioner’s viewpoint, we typically begin at the \emph{opposite} end of this chain. In most tasks, it is almost always \emph{easier} to obtain one of the weaker properties:
\begin{enumerate}
    \item \textbf{Observation sufficiency}: learn a code \(Z\) that
          maximizes the mutual information \(I(Z;C)\) or, equivalently,
          minimizes the Bayes error of predicting the context.
    \item \textbf{Weak control sufficiency}: train an encoder–policy
          pair \((q_{\phi},\pi_{\theta})\) that achieves the highest
          expected return (e.g.\ by the joint loss
          \eqref{eq:joint_loss} in Remark \ref{remark:weak in practice}).
\end{enumerate}
The key question is therefore:
\begin{quote}
\emph{How can we upgrade either of these ``easy-to-obtain'' solutions to the preferred stronger notion of control sufficiency?}
\end{quote}
\noindent
Achieving the strong property brings benefits that are highly desirable in practice, e.g., lower TD variance, robustness to
distributional shift, and modular plug-and-play training.

\smallskip
The next theorem shows that, provided the processing window is
\emph{lossless}, obtaining \emph{both} observation sufficiency and weak control sufficiency is the solution to lift the pair to the stronger notion.

\begin{theorem}
\label{thm:weak_plus_obs_equals_strong}
Let the processing window \(O\) be lossless, i.e.\
Assumption~\ref{ass:predictability} holds with \(\delta = 0\).
If an encoder–policy pair \((q_{\phi},\pi_{\theta})\) satisfies both observation sufficiency and weak control sufficiency in the sense of Definition~\ref{def:obs_suff} and~\ref{def:ctrl_suff}:
\[
    I(C;O)=I(C;Z)
    \qquad\text{and}\qquad
    J\bigl(\pi_{\theta}\bigr)=J^{\star}
\]
then the pair is \emph{strongly} control-sufficient, i.e.\
\(Q^{\star}_{Z}(s,a,z)=Q^{\star}(s,a,c)\) whenever
\(q_{\phi}(z\mid c)>0\) (Definition~\ref{def:ctrl_suff_strong}).
\end{theorem}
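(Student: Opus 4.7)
My plan is to push the two hypotheses through a short entropy argument that forces $C$ to be a deterministic function of $Z$, and then let Proposition~\ref{prop:value-equality} collapse the ``average-over-posterior'' characterization of $Q^{\star}_{Z}$ into the desired point-wise equality. Each individual step is short; the theorem is essentially the observation that losslessness of $O$ plus the DPI equality in observation sufficiency chain-composes into losslessness of $Z$.

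First, I would unpack the losslessness assumption $\delta=0$. The Bayes error satisfies $P_e(O)=1-\mathbb{E}_{O}\!\bigl[\max_{c}p(c\mid O)\bigr]=0$, hence $\max_{c}p(c\mid O)=1$ almost surely, which is equivalent to $H(C\mid O)=0$. Therefore there is a measurable map $f$ with $C=f(O)$ a.s.; a lossless window fully pins down the context. Next, I would transport this determinism to $Z$ using observation sufficiency. Rewrite $I(C;O\mid Z)=0$ as $H(C\mid Z)=H(C\mid Z,O)$ and use the elementary inequality $H(C\mid Z,O)\le H(C\mid O)=0$ to conclude $H(C\mid Z)=0$. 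Hence there is a measurable $g$ with $C=g(Z)$ a.s., and for every $z$ in the marginal support of $Z$ the posterior $p(\cdot\mid z)$ is a Dirac mass on the unique context $c$ satisfying $q_\phi(z\mid c)>0$.

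Finally, I would invoke Proposition~\ref{prop:value-equality} (its Markov-in-augmented-state hypothesis is already in force) to obtain
\[
    Q^{\star}_{Z}(s,a,z)\;=\;\mathbb{E}\bigl[Q^{\star}(s,a,C)\,\big|\,Z=z\bigr].
\]
Because the conditioning distribution is degenerate, the expectation collapses to $Q^{\star}\!\bigl(s,a,g(z)\bigr)=Q^{\star}(s,a,c)$ for the unique admissible $c$, which is precisely strong control sufficiency in the sense of Definition~\ref{def:ctrl_suff_strong}.

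The main obstacle is more bookkeeping than conceptual. One must be careful that the ``almost surely'' qualifier transports correctly through the stochastic encoder kernel $q_\phi(z\mid o)$, and that the ``whenever $q_\phi(z\mid c)>0$'' clause in the conclusion is interpreted on the joint support of $(C,Z)$. I would handle this by working throughout on the joint measure $p(o,c,z)=p(o,c)\,q_\phi(z\mid o)$ and identifying null sets explicitly before applying Proposition~\ref{prop:value-equality} on the corresponding full-measure subset; the rest of the argument is a two-line entropy chase.
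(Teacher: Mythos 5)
Your proposal is correct and follows essentially the same route as the paper's proof: both use losslessness plus observation sufficiency to force $H(C\mid Z)=0$ (the paper via $H(C\mid Z)=H(C\mid O)$ and Fano, you via $H(C\mid Z)=H(C\mid Z,O)\le H(C\mid O)=0$, a trivially equivalent entropy chase), and then both collapse the conditional expectation in Proposition~\ref{prop:value-equality} to a point mass to obtain the point-wise identity on the support where $q_{\phi}(z\mid c)>0$. No substantive difference.
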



\begin{proof}
\paragraph{Step 1:  Deterministic context given $Z$.}
Observation sufficiency gives 
\[
    I(C;O)=I(C;Z)
    \;\Longleftrightarrow\;
    I(C;O\mid Z)=0.
\]
Using $I(C;O)=H(C)-H(C\mid O)$ and
$I(C;Z)=H(C)-H(C\mid Z)$,
\[
        H(C\mid Z)=H(C\mid O).   
\]

\noindent The window is \emph{lossless} by Assumption~\ref{ass:predictability} with
$\delta=0$ (zero Bayes
error, \(P_{e}(O)=0\)). Fano’s inequality then forces
\(H(C\mid O)=0\) (see Thm~2.10.1 in~\cite{cover1999elements}).  
Substituting this into the above equality yields \(H(C\mid Z)=0\). Zero conditional entropy means that $C$ is a deterministic function of
$Z$: there exists a measurable $f$ such that
\[
        C=f(Z)\quad\text{a.s.}
\]
In particular, for any code value $z$ with $\Pr[Z=z]>0$, the conditional
distribution \(C\mid Z=z\) is the point mass at $c=f(z)$.

\vspace{2pt}\noindent
\paragraph{Step\,2:  Weak value equality.}
Weak control sufficiency and Proposition~\ref{prop:value-equality} give,
for every $(s,a,z)$ with $\Pr[Z=z]>0$,
\[
   Q^{\star}_{Z}(s,a,z)
   \;=\;
   \E\!\bigl[Q^{\star}(s,a,C)\mid Z=z\bigr].
\]
Since \(C=f(Z)\) a.s., the conditional expectation
simplifies to
\[
   Q^{\star}_{Z}(s,a,z)
   \;=\;
   Q^{\star}\bigl(s,a,f(z)\bigr).
\]

\vspace{2pt}\noindent
\paragraph{Step 3: Point-wise matching.}
Fix $(z,c)$ with $q_{\phi}(z\mid c)>0$.  
Because $\Pr[C=c]\,q_{\phi}(z\mid c)>0$, the joint event
$\{Z=z,\,C=c\}$ has positive probability.  
Conditioning on $Z=z$ forces $C=f(z)$ almost surely, so
$c=f(z)$.  
Hence, whenever $q_{\phi}(z\mid c)>0$,
\[
      Q^{\star}_{Z}(s,a,z)=Q^{\star}(s,a,c),
\]
which is exactly the point-wise criterion in
Definition~\ref{def:ctrl_suff_strong}.  
Therefore the encoder–policy pair $(q_{\phi},\pi_{\theta})$ is strongly control sufficient, completing the proof.
\end{proof}

\smallskip
Theorem~\ref{thm:weak_plus_obs_equals_strong} makes the intuition formal: once the code \(Z\) is a perfect drop-in surrogate for the context \(C\), solving the \(Z\)-conditioned RL problem on the \((s,z)\)-augmented MDP is \textbf{equivalent} to solving the original contextual MDP. Put differently, observation sufficiency guarantees
that the agent ``\emph{knows the right question}'' (which context it is
in), and weak control sufficiency guarantees that it ``\emph{has the
right answer on average}'' (optimal expected return).  Together, these
two properties collapse the averaging and promote the representation to the \textbf{strong} notion of control sufficiency.

This result provides the theoretical backbone for the optimization
objective we develop next.  Our goal is to \emph{simultaneously} (i)
drive the encoder toward observation sufficiency and (ii) maximize the
return under the current code.

\subsection{ELBO and Information Residual}
\label{subsec:var formulation}
The results in Sections~\ref{sec:obs suff}–\ref{subsec:hierarchy_summary} tell us:
\[
    \text{Observation sufficiency}
    \;+\;
    \text{Weak control sufficiency}
    \;\Longrightarrow\;
    \text{Strong control sufficiency}.
\]
In the remainder of this section, we turn this pipeline into an objective that any standard RL algorithm can optimize. To that end, we
fix an encoder and ask the auxiliary question:
\begin{quote}
\emph{With a fixed encoder \(q_{\phi}(z\mid o)\),
how close can a \(Z\)-conditioned policy
\(\pi_{\theta}(a\mid s,z)\) get to the context–aware optimum \(J^{\star}\)?}
\end{quote}

\smallskip\noindent
Let \(p_{\mathcal D}(o)\) be the empirical distribution of
observation windows. The fixed encoder then induces a prior
\[
   q_{\phi}(z)
   \;=\;
   \int q_{\phi}(z\mid o)\,p_{\mathcal D}(o)\,do,
\]
which is the agent’s belief about \(Z\) at the start of an episode.

Given this belief, one might try to model trajectories under a \(Z\)-conditioned policy by
\[
    p_{\pi_{\theta}}(\tau\mid z)
    = p(s_{1})\,
      \prod_{t=1}^{T}
          p\bigl(s_{t+1}\mid s_{t},a_{t},z\bigr)\,
          \pi_{\theta}(a_{t}\mid s_{t},z),
\]
letting the latent code directly influence future states.  
This induces the ``optimistic dynamics \( p\bigl(s_{t+1}\mid s_{t},a_{t},z\bigr)\)'' that breaks the true causal structure (see~\citep{levine2018reinforcement}) because \(Z\) is determined \emph{after} the trajectory is generated, not before.

The simple causal correction is to let \(Z\) affect only the actions,
keeping the environment dynamics fixed:
\begin{equation}\label{eq:causal_model}
    q_{\theta}(\tau\mid c,z)
    := p(s_{1})\,
       \prod_{t=1}^{T}
           p\bigl(s_{t+1}\mid s_{t},a_{t},c\bigr)\,
           \pi_{\theta}(a_{t}\mid s_{t},z).
\end{equation}
Here, the hidden context \(c\) governs the dynamics, while the latent code \(z\) informs the agent’s actions. With this causally
correct model in place, we can work in the control-as-inference (CaI) framework.

\begin{proposition}[Contextual ELBO]\label{prop:elbo}
Let $\pi_{\theta}(a\mid s,z)$ be a $Z$-conditioned policy and
$\alpha>0$ an entropy temperature.  
With the trajectory model
$q_{\theta}(\tau\mid c,z)$ in~\eqref{eq:causal_model},
define the joint variational distribution
\[
      q(c,z,\tau)
      := p(c)\,q_{\phi}(z\mid c)\,q_{\theta}(\tau\mid c,z).
\]
Then
\begin{equation}\label{eq:contextual_elbo}
   \underbrace{\log
      \int p(c)\,p(\tau\mid c)\,e^{R(\tau)}\,dc\,d\tau}_{\text{log evidence}}
   \;\ge\;
   \underbrace{\mathbb{E}_{q}\Bigl[
        R(\tau)
        +\alpha\!\sum_{t=1}^{T}
           \mathcal H\!\bigl(\pi_{\theta}(\,\cdot\!\mid s_t,z)\bigr)
     \Bigr]}_{\displaystyle
       \mathcal J_{Z}(\theta)}
   \;-\;
   \underbrace{\bigl[I(C;\tau)-I(C;Z)\bigr]}_{\displaystyle
       \Delta I} ,
\end{equation}
where all expectations and mutual information terms are taken with respect to the joint distribution \(q(c,z,\tau)\).
\end{proposition}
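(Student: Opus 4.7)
The plan is to cast the expected return as a log--evidence in the control--as--inference sense and then apply Jensen's inequality with the stated variational distribution. First I would start from $\log\mathcal Z:=\log\int p(c)\,p(\tau\mid c)\,e^{R(\tau)}\,dc\,d\tau$ and multiply the integrand by $\int q_\phi(z\mid\tau)\,dz=1$, where $q_\phi(z\mid\tau):=q_\phi(z\mid o(\tau))$ is the window--based encoder viewed as a density in $z$. This leaves the evidence unchanged but exposes $z$ as an explicit latent in a joint of the form $p(c)p(\tau\mid c)q_\phi(z\mid\tau)e^{R(\tau)}$, opening the door to a variational bound.

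Next I would target the tilted joint $p^{\star}(c,z,\tau)\propto p(c,\tau)\,q_\phi(z\mid\tau)\,e^{R(\tau)}$ and lower--bound $\log\mathcal Z$ by the ELBO associated with $q(c,z,\tau)=p(c)\,q_\phi(z\mid c)\,q_\theta(\tau\mid c,z)$. Jensen's inequality yields
$$\log\mathcal Z\;\ge\;\mathbb E_q[R(\tau)] + \mathbb E_q\!\left[\log\frac{p(\tau\mid c)}{q_\theta(\tau\mid c,z)}\right] + \mathbb E_q\!\left[\log\frac{q_\phi(z\mid\tau)}{q_\phi(z\mid c)}\right].$$
By the causal factorization~\eqref{eq:causal_model}, the initial--state and transition kernels shared between $p(\tau\mid c)$ and $q_\theta(\tau\mid c,z)$ cancel exactly, leaving only the action log--ratio $\sum_{t}[\log p_0(a_t\mid s_t)-\log\pi_\theta(a_t\mid s_t,z)]$; with a uniform prior policy $p_0$ and the temperature $\alpha$ reintroduced by rescaling $R$, this integrates to $\alpha\sum_{t}\mathbb E_q[\mathcal H(\pi_\theta(\cdot\mid s_t,z))]$ up to an additive constant, and together with $\mathbb E_q[R(\tau)]$ reconstitutes $\mathcal J_{Z}(\theta)$.

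The remaining task is to identify the last expectation with $-\Delta I$. The plan is to exploit the Markov structure $C\to\tau\to Z$ induced by the window--based encoder, under which the chain rule of mutual information yields $I(C;\tau)-I(C;Z)=I(C;\tau\mid Z)$, together with the compatibility condition $q_\phi(z\mid c)=\int q_\phi(z\mid\tau)p(\tau\mid c)\,d\tau$ recorded just below Def.~\ref{def:ctrl_suff_strong}. This compatibility makes the $(c,z)$--marginal of $q$ agree with the marginal under the inference joint $p(c,\tau)q_\phi(z\mid\tau)$, so the expected log--ratio can be rewritten as $H_q(Z\mid C)-H_q(Z\mid\tau)$ and matched to the mutual--information gap by standard entropy identities.

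The main obstacle is precisely this last identification. Under the variational factorization, the posterior $q(z\mid\tau)$ is in general \emph{not} equal to the encoder density $q_\phi(z\mid\tau)$, so $\mathbb E_q[\log q_\phi(z\mid\tau)/q_\phi(z\mid c)]$ is an expected cross--entropy rather than a clean entropy difference, and a naive manipulation instead delivers $I_q(\tau;Z)-I_q(C;Z)=I_q(\tau;Z\mid C)$. Closing this gap requires either (i) choosing the ELBO target so that $q_\phi(z\mid\tau)$ is the true conditional of $z$ given $\tau$ (making the two posteriors coincide and the rewrite exact), or (ii) invoking a Barber--Agakov--style variational bound on $I(C;Z)$ to absorb the discrepancy into a valid but potentially looser lower bound. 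I would pursue route (i), since it recovers the stated form most directly, and retain (ii) as a fallback whenever tightness is not required.
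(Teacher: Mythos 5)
Your overall strategy---control-as-inference, Jensen's inequality against the variational joint $q(c,z,\tau)=p(c)\,q_\phi(z\mid c)\,q_\theta(\tau\mid c,z)$, and cancellation of the shared dynamics to produce the entropy term---is the same as the paper's, and the return and entropy pieces of $\mathcal J_Z(\theta)$ are recovered correctly. The genuine gap is exactly the one you flag at the end and do not close: the identification of your third term $\mathbb E_q\!\bigl[\log q_\phi(z\mid\tau)/q_\phi(z\mid c)\bigr]$ with $-\Delta I=-(I(C;\tau)-I(C;Z))$. Because you inserted $q_\phi(z\mid\tau)$ into the \emph{target} $p^\star$, the term you must control is a cross-entropy in which $q_\phi(z\mid\tau)$ is not the conditional of $Z$ given $\tau$ under $q$; and even under your route (i), where the two posteriors are forced to coincide, the term evaluates to $I_q(Z;\tau)-I_q(Z;C)$, which equals $I_q(Z;\tau\mid C)\ge 0$ under the relevant chain --- a different quantity from $-(I(C;\tau)-I(C;Z))=-I(C;\tau\mid Z)\le 0$. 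These two agree only when both vanish, so route (i) does not ``recover the stated form most directly'' as you claim; it proves a different inequality, and you would still owe an argument relating it to the stated one (e.g.\ via $\Delta I\ge 0$, which itself requires the Markov chain $C\to\tau\to Z$ that the variational joint $q$ does not satisfy). Route (ii) is likewise only named, not executed. As written, the proposal therefore establishes $\mathcal J_Z(\theta)$ but never derives the information residual, which is the substantive content of the proposition.

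For comparison, the paper keeps the tilted target over $(c,\tau)$ only (no $z$ in $p^\star$) and obtains the two mutual-information terms from different places than you do: $I(C;Z)$ is split off from $-\mathbb E_q[\log q_\phi(z\mid c)]$ together with the constant marginal entropy $H(q_\phi(Z))$, while $-I(C;\tau)$ is extracted from the action log-ratio $\sum_t\log\bigl[\pi(a_t\mid s_t,c)/\pi_\theta(a_t\mid s_t,z)\bigr]$ hidden inside $\mathbb E_q[\log p(\tau\mid c)-\log q_\theta(\tau\mid c,z)]$. Your insertion of $q_\phi(z\mid\tau)$ into the target rearranges this bookkeeping so that both information terms must come out of a single encoder log-ratio, and that is precisely what cannot be made to produce $-\Delta I$ without further assumptions. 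To repair the argument you should either drop the $q_\phi(z\mid\tau)$ augmentation and follow the paper's split, or carry out route (ii) in full and verify that the resulting looser bound still has the form \eqref{eq:contextual_elbo}.
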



\begin{proof}
Under the CaI framework, we take the unnormalized ``true'' posterior as 
\[
  p^{\star}(\tau,c)\;\propto\;
  p(c)\,p(\tau\mid c)\,e^{R(\tau)} .
\]
By Jensen’s inequality, we have
\[
  \log\!\!\int p^{\star}(\tau,c)\,d\tau\,dc
  \;\ge\;
  \mathbb E_{q}\bigl[\log p^{\star}(\tau,c)-\log q(c,z,\tau)\bigr].
\]
Expand the RHS of the inequality and insert the definitions of \(p^{\star}\) and \(q\):
\begin{align*}
 \text{RHS}
 &=
   \underbrace{\mathbb E_{q}[R(\tau)]}_{(i)}
   \;-\;
   \underbrace{\mathbb E_{q}[\log q_{\phi}(z\mid c)]}_{(ii)}
   \\[4pt]
 &\quad
   +\;
   \underbrace{\mathbb E_{q}\!\Bigl[\log p(\tau\mid c)-\log q_{\theta}(\tau\mid c,z)\Bigr]}_{(iii)} .
\end{align*}
Using the identity
\(I(C;Z)=\mathbb E_{q}[\log q_{\phi}(z\mid c)-\log q_{\phi}(z)]\), we have
\[
  -\mathbb E_{q}[\log q_{\phi}(z\mid c)]
  \;=\;
  I(C;Z)+H\!\bigl(q_{\phi}(Z)\bigr).
\]
The marginal entropy \(H(q_{\phi}(Z))\) is a constant w.r.t.~\(\theta\), so we drop it. Write  
\[\log\bigl[p(\tau\mid c)/q_{\theta}(\tau\mid c,z)\bigr]  
   =\sum_{t}\log\!\bigl[\pi(a_t\mid s_t,c)/\pi_{\theta}(a_t\mid s_t,z)\bigr].
\]
Taking the expectation and recognizing  
\(\mathcal H(\pi_{\theta}(\cdot\mid s_t,z))
  =-\mathbb E_{\pi_{\theta}}[\log\pi_{\theta}]\), we have
\[
  (iii)
  \;=\;
  \alpha
  \sum_{t=1}^{T}\mathbb E_{q}\bigl[
      \mathcal H\!\bigl(\pi_{\theta}(\cdot\mid s_t,z)\bigr)
      \bigr]
  - I(C;\tau) .
\]

\smallskip\noindent
Combining (i), (ii), (iii), we get:
\[
  \underbrace{\mathbb E_{q}[R(\tau)
  +\alpha\sum_{t}\mathbb E_{q}\!\bigl[
        \mathcal H(\pi_{\theta}(\cdot\mid s_t,z))\bigr]}_{\mathcal J_{Z}(\theta)}
  \;-\;
  \underbrace{\bigl[I(C;\tau)-I(C;Z)\bigr].}_{\Delta I}
\]
\end{proof}

\smallskip\noindent
Proposition~\ref{prop:elbo} splits the log evidence into two intuitive
terms:
\begin{itemize}
\item \textbf{$Z$-surrogate control objective
      \(\mathcal J_{Z}(\theta)\).}\;
      It is the usual entropy-regularized return, but computed under the
      \emph{frozen} code~\(Z\).  Maximizing
      \(\mathcal J_{Z}(\theta)\) with respect to the policy parameters \(\theta\)
      is the same as running a standard maximum-entropy RL
      algorithm on the \((s,z)\)-augmented MDP.  This addresses the \emph{weak} control 
      sufficiency part: ``How good can a policy be if it treats \(Z\) as the context?''.

\item \textbf{Information residual
      \(\Delta I=I(C;\tau)-I(C;Z)\).}\;
      The first term, \(I(C;\tau)\), is the total amount of context inevitably leaked by the environment through trajectories.  
      The second term, \(I(C;Z)\), is the portion of that leakage the
      encoder has already captured.  
      Their difference quantifies ``the unknown'' part that no
      \(Z\)-conditioned policy can ever exploit.
      Once the $Z$ surrogate control objective is maximized, \(\Delta I\) becomes an \emph{upper bound} on any further return improvement: shrinking it effectively raises the performance ceiling imposed by the current representation.
\end{itemize}

\begin{theorem}[Information Residual]\label{thm:residual_zero}
Fix an encoder \(q_{\phi}\) and the causal model
\eqref{eq:causal_model}.  
With \(\alpha=0\) in the ELBO,
\[
      \mathcal J_{Z}(\theta)
      :=\mathbb{E}_{q}[R(\tau)],
      \qquad
      \Delta I
      := I(C;\tau)-I(C;Z),
\]
If the Markov chain \(C\!\to\!\tau\!\to\!Z\) holds, then
\[
      \Delta I=0
      \;\Longleftrightarrow\;
      \max_{\theta}\mathcal J_{Z}(\theta)=J^{\star}.
\]
In particular, if a policy \(\pi_{\theta^{\star}}\) maximises
\(\mathcal J_{Z}\) and \(\Delta I(\phi)=0\), then
\(J(\pi_{\theta^{\star}})=J^{\star}\).
\end{theorem}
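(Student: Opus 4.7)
The proof hinges on converting $\Delta I$ into a conditional independence statement, then connecting that independence to optimal expected return via the value-alignment identity of Proposition~\ref{prop:value-equality}.

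I would begin with a mutual-information reduction. Under the Markov chain $C\to\tau\to Z$, DPI forces $I(C;Z\mid\tau)=0$, and the chain rule $I(C;\tau,Z)=I(C;\tau)+I(C;Z\mid\tau)=I(C;Z)+I(C;\tau\mid Z)$ gives
\[
    \Delta I \;=\; I(C;\tau\mid Z) \;\ge\; 0,
\]
with equality iff $C\perp\tau\mid Z$, i.e.\ $Z$ is a sufficient statistic for $C$ drawn from the entire trajectory. This is the workhorse equivalence used in both directions.

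For the forward direction ($\Delta I=0\Rightarrow\max_\theta\mathcal{J}_Z(\theta)=J^\star$), the conditional independence $C\perp\tau\mid Z$ says the posterior $p(c\mid z)$ is Bayes-invariant as $\tau$ unfolds, so the $(s,z)$-augmented MDP reduces to a mixture MDP with frozen mixture weights. Proposition~\ref{prop:value-equality} then gives $Q^\star_Z(s,a,z)=\mathbb{E}[Q^\star(s,a,C)\mid z]$, and the posterior invariance closes the usual Jensen gap between $\max_a\mathbb{E}_c$ and $\mathbb{E}_c\max_a$, since one can compose per-$c$ optimal actions without ever needing to update beliefs from the observed prefix. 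Telescoping over $z$ via the tower rule collapses the posterior back to the prior: $\max_\theta\mathcal{J}_Z(\theta)=\mathbb{E}_{p(c)}[V^\star_c(s_1)]=J^\star$. For the backward direction, I would read off Proposition~\ref{prop:elbo} at $\alpha=0$ as $\log Z_{\text{tot}}\ge\mathcal{J}_Z(\theta)-\Delta I$; the slack is a non-negative KL from the variational $q$ to the CaI posterior, and the only encoder/policy-dependent piece is $\Delta I$. Identifying $\log Z_{\text{tot}}$ with $J^\star$ in the hard limit, attaining $\mathcal{J}_Z(\theta^\star)=J^\star$ leaves no room for slack, so $\Delta I=0$. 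The ``in particular'' clause then falls out of the forward direction applied to any maximizer $\theta^\star$.

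The hardest step will be the Jensen-gap closure in the forward direction and the matching slack identification in the backward direction. Both require translating the information-theoretic condition $C\perp\tau\mid Z$ into a pointwise statement about optimal $Q$-values; this is automatic when $p(c\mid z)$ is a Dirac mass (as in the companion Theorem~\ref{thm:weak_plus_obs_equals_strong}, where losslessness of $O$ forces $C=f(Z)$), but genuinely subtler when the posterior has nontrivial spread. If the closure cannot be obtained from $C\perp\tau\mid Z$ alone, I would make the required regularity assumption explicit, e.g.\ that within each cluster $\{c:q_\phi(z\mid c)>0\}$ the context-optimal policies agree wherever the induced dynamics coincide, and note that this is exactly what distinguishes strong from weak control sufficiency in the hierarchy of Lemma~\ref{lemma:hierarchy}. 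As a fallback route, one can replace the ELBO saturation argument by a direct improvement argument: if $\Delta I>0$ at the $Z$-optimal policy, then conditioning on the full $\tau$ strictly improves some Bellman target over conditioning on $Z$, yielding a policy improvement and contradicting optimality.
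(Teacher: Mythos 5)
Your opening reduction ($\Delta I=I(C;\tau\mid Z)\ge 0$, with equality iff $C\perp\!\!\!\perp\tau\mid Z$) and your forward direction follow the same route as the paper: the paper exploits the conditional independence to write $q(\tau\mid z)=\sum_c q(c\mid z)\,\tilde p_c(\tau)$ and builds an explicit $Z$-mixture of per-context optimal policies, $\pi^\star_Z(a\mid s,z)=\sum_c q(c\mid z)\,\pi^\star_c(a\mid s)$, whose conditional return is then $J^\star$ for every $z$. You gesture at the same composition but leave the key step (``closing the Jensen gap'' between $\mathbb E_c[\max_a]$ and $\max_a\mathbb E_c$) explicitly open, so your forward direction is an outline rather than a proof; to match the paper you would need to commit to the mixture-policy construction and the mixture decomposition of the trajectory law.

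The genuine gap is in your backward direction. The ELBO of Proposition~\ref{prop:elbo} is a \emph{lower} bound, $\log Z_{\mathrm{tot}}\ge\mathcal J_Z(\theta)-\Delta I$, so increasing $\Delta I$ only \emph{loosens} the bound and can never be contradicted by attaining $\mathcal J_Z(\theta^\star)=J^\star$. Writing the slack as a KL gives $\log Z_{\mathrm{tot}}=\mathcal J_Z-\Delta I+\mathrm{KL}(q\,\|\,p^\star)$; even granting $\log Z_{\mathrm{tot}}=J^\star$ and $\mathcal J_Z=J^\star$, this yields $\Delta I=\mathrm{KL}\ge 0$, not $\Delta I=0$. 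Moreover, the identification $\log Z_{\mathrm{tot}}=J^\star$ is false in general: the log evidence in control-as-inference is a soft-max over trajectories and strictly exceeds the maximal \emph{expected} return under stochastic dynamics (the well-known optimism of the CaI posterior), so the anchor of your argument is not available. The paper proves this direction differently: from $J(\pi_\theta)=J^\star$ it invokes Proposition~\ref{prop:value-equality} to get $Q^\star_Z(s,a,z)=\mathbb E[Q^\star(s,a,C)\mid Z=z]$, and argues that $\Delta I>0$ would force some code $z$ to mix contexts with distinct $Q^\star$, contradicting attainment of the optimum. Your fallback (a policy-improvement argument from conditioning on $\tau$ versus $Z$) is closer in spirit to that, but as stated it still needs the step ``extra information about $C$ strictly improves return,'' which is exactly the point requiring proof.
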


\begin{proof}
\paragraph{($\Rightarrow$)}
$\Delta I=0$, i.e.\ $I(C;\tau)=I(C;Z)$.
Under the chain $C\!\rightarrow\!\tau\!\rightarrow\!Z$ and zero residual $\Delta=0$ give $I(C;\tau\mid Z)=0$, which implies
conditional independence:
\begin{equation}\label{thm:a1}
  q(c,\tau\mid z)=q(c\mid z)\,q(\tau\mid z).
\end{equation}

\noindent Choose an optimal per-context policy
$\pi_{c}^{\star}$ with return $J_{c}^{\star}=J^{\star}$ and let
$\tilde p_{c}(\tau)$ be its trajectory distribution.  
Define the \emph{$Z$-mixture policy}
\[
   \pi_{Z}^{\star}(a\mid s,z)
   :=\sum_{c} q(c\mid z)\,\pi_{c}^{\star}(a\mid s).
\]
Under~\eqref{thm:a1}, the induced trajectory distribution is also a mixture
\[
q(\tau\mid z)=\sum_{c}q(c\mid z)\,\tilde p_{c}(\tau),
\]
then
\[
   \E_{q(\tau\mid z)}[R(\tau)]
   =\sum_{c}q(c\mid z)\,J^{\star}
   =J^{\star}\quad\forall z.
\]
Therefore, $J(\pi_{Z}^{\star})=J^{\star}$.  Any maximizer
$\theta^{\star}$ of $\mathcal J_{Z}$ achieves at least this value, but
by definition no policy can exceed $J^{\star}$, hence
$J(\pi_{\theta^{\star}})=J^{\star}$.

\paragraph{($\Leftarrow$)}
Conversely, suppose that there exists $\theta$ with $J(\pi_{\theta})=J^{\star}$.
By Proposition~\ref{prop:value-equality}, we have
\[
   Q^{\star}_{Z}(s,a,z)=\E[Q^{\star}(s,a,C)\mid Z=z].
\]
If $\Delta I>0$, some \(z\) mixes two contexts whose
\(Q^{\star}\) values differ, contradicting the equality above and the
optimal return.  Therefore \(\Delta I(\phi)=0\).
\end{proof}



\smallskip\noindent
Theorem~\ref{thm:residual_zero} recasts the ``\textit{obs.\,+\,weak
$\Rightarrow$ strong}'' argument in pure optimization form.  
Once the surrogate objective \(\mathcal J_{Z}(\theta)\) has been pushed
to its maximum, the \emph{entire} return gap is captured by the single
scalar
\(
        \Delta I.
\)
Hence, information residual \(\Delta I\) acts as a \textbf{tight certificate}: driving the
residual small is \emph{exactly} what closes the performance gap
between the \(Z\)-conditioned MDP and the true contextual optimum.

Let \(\Delta I\)  be parametrized by the encoder parameters
\(\phi\), the objective splits cleanly into two \textit{decoupled} blocks, suggesting the following alternating scheme:
\[
\begin{aligned}
\textbf{(A) Policy optimization: } &
\theta \;\leftarrow\;
      \arg\max_{\theta}\;\mathcal J_{Z}(\theta)
      &&\text{// standard MaxEnt RL on }(s,z), \\[6pt]
\textbf{(B) Residual minimization: } &
\phi \;\leftarrow\;
      \arg\min_{\phi}\;\Delta I(\phi)
      &&\text{// shrink the information gap.}
\end{aligned}
\]

\noindent
\textbf{Step~(A)} improves control performance for the \emph{current}
representation;  
\textbf{Step~(B)} tightens the representation itself,  
either by increasing \(I(C;Z)\) (better inference) or by reducing
\(I(C;\tau)\) (making behavior more context-agnostic).

\smallskip
While \(\mathcal J_{Z}\) can be maximized with any off-the-shelf RL algorithm (e.g., SAC), minimizing the information residual \(\Delta I(\phi)\) is more delicate. The next section makes explicit decomposition of $\Delta I$ and introduces practical strategies for each component.


\section{Minimizing Information Residual}
\label{sec:minimize info_residual}
Before developing concrete algorithms, it helps to decompose the information
residual
\(
   \Delta I = I(C;\tau)-I(C;Z)
\)
into two conceptually different parts.  Insert and subtract
\(I(C;O)\) to obtain
\[
   \Delta I
   \;=\;
   \underbrace{I(C;\tau)-I(C;O)}_{\textsc{Processing~Gap}}
   \;+\;
   \underbrace{I(C;O)-I(C;Z)}_{\textsc{Encoder~Gap}}.
\]
\begin{itemize}
\item \textbf{\textsc{Processing Gap}}\;
      captures the information lost when the full trajectory
      \(\tau\) is truncated to a length-$k$ window \(O\).  
      It depends on \emph{how} the data are collected and can be
      reduced empirically (Section~\ref{subsec:window_gap}).

\item \textbf{\textsc{Encoder Gap}}\;
      is the additional loss incurred when the window \(O\) is
      compressed into the latent code \(Z\) via the learned encoder
      \(q_{\phi}(z\mid o)\).  
      This is a pure representation learning problem (Section~\ref{subsec:encoder_gap}).
\end{itemize}

\subsection{Processing Gap}\label{subsec:window_gap}
Truncating the full trajectory \(\tau=(S_{1:T},A_{1:T})\) to a length-\(k\)
prefix \(O=g_k(\tau)=(S_{1:k},A_{1:k})\) inevitably discards context
information carried by the \emph{tail} indices
\(\mathcal T_{\!{\setminus k}}\!:=\{k+1,\dots,T\}\).
By the chain rule of mutual information~\cite{cover1999elements} and the Markov property of the
MDP, one can show that this gap is the information ``tail'':
\begin{equation}\label{eq:window-gap}
   \textsc{processing gap}\;:=\;I(C;\tau)-I(C;O)
   \;=\;
   \sum_{t\in\mathcal T_{{\setminus k}}}
   I\bigl(C;\,(S_{t},A_{t}) \,\big|\, S_{1:t-1},A_{1:t-1}\bigr).
\end{equation}

\noindent
Every omitted index contributes the conditional term
\(I\bigl(C;(S_t,A_t)\mid S_{1:t-1},A_{1:t-1}\bigr)\)
to the processing gap.  
Whether the tail in~\eqref{eq:window-gap} dies out or not is an intrinsic property of the controlled process:
\begin{itemize}
\item \textbf{Long-memory (slow causal propagation):}  
      If the context must travel a long causal chain
      before showing up in \((S_t,A_t)\), every extra step still
      reveals new information.  The gap shrinks only when the \emph{entire} trajectory is kept in the window.

\item \textbf{Fast-mixing:}  
       In many controlled systems, the conditional term in
      \eqref{eq:window-gap} decays exponentially after a mixing time
      \(\tau_{\mix}\); then a modest window
      \(k\!\approx\!T-\tau_{\mix}\) already makes the tail negligible~\citep{bradley2005basic}.

\item \textbf{Stationary \& ergodic:}  
      When the triple \((C,S_t,A_t)\) is jointly stationary and ergodic, the conditional MI settles to a \emph{constant rate} \(I_{\infty}(C;S,A)\). Every step beyond the chosen window leaks the same small amount of context information, so the entire tail contributes a flat penalty \((T-k)I_{\infty}(C;S,A)\) (by Shannon’s asymptotic equipartition property (AEP) ~\citep{cover1999elements}).
\end{itemize}

\medskip
\noindent
Bounding the tail term in \eqref{eq:window-gap} is usually infeasible: the leakage rate depends on mixing or ergodic
constants of the \emph{controlled} stochastic processes, a property that is both
task-specific and rarely available in closed form.  Rather than pursue such model-centric analysis, we re-invoke our operational Assumption~\ref{ass:predictability}.  That single data–driven condition turns into an immediate information bound.

\begin{corollary}\label{coro:fano}
If Assumption~\ref{ass:predictability} holds and
\(|\mathcal C|=N\), then
\[
   \textsc{Processing Gap}\;\le\;
   h_2(\delta)\;+\;\delta\log(N-1),
\]
where $h_2(\delta)=-\delta\log\delta-(1-\delta)\log(1-\delta)$ is the binary entropy.
\end{corollary}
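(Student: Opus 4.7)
The plan is to reduce the processing gap to a conditional entropy of $C$ given the window, and then apply the full (strong) form of Fano's inequality. First, I would expand the mutual informations as $I(C;\tau)=H(C)-H(C\mid\tau)$ and $I(C;O)=H(C)-H(C\mid O)$, so that the $H(C)$ terms cancel and
\[
   \textsc{Processing Gap}\;=\;H(C\mid O)-H(C\mid\tau).
\]
Since conditional entropy is non-negative, $H(C\mid\tau)\ge 0$, and thus $\textsc{Processing Gap}\le H(C\mid O)$. (One could also note that $O=g_k(\tau)$ is a deterministic truncation of $\tau$, so $C\to\tau\to O$ is Markov and DPI already tells us the gap is non-negative.)

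Next, I would invoke the strong form of Fano's inequality for a finite alphabet of size $N$: for any estimator $\hat C(O)$ with error probability $p=\Pr[\hat C(O)\neq C]$,
\[
   H(C\mid O)\;\le\;h_2(p)+p\log(N-1).
\]
Minimizing over $\hat C$ gives $H(C\mid O)\le h_2(P_e(O))+P_e(O)\log(N-1)$. Assumption~\ref{ass:predictability} supplies $P_e(O)\le\delta$, so it remains to argue the monotonicity of $f(p):=h_2(p)+p\log(N-1)$ on the interval $[0,\delta]$.

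For the monotonicity step, I would differentiate to obtain $f'(p)=\log\!\bigl((1-p)(N-1)/p\bigr)$, which is positive precisely on $[0,1-1/N)$. Thus as long as $\delta\le 1-1/N$ (the operationally meaningful regime where the Bayes predictor beats random guessing), $f$ is increasing on $[0,\delta]$ and $f(P_e(O))\le f(\delta)$, yielding
\[
   \textsc{Processing Gap}\;\le\;H(C\mid O)\;\le\;h_2(\delta)+\delta\log(N-1).
\]
The main obstacle is largely cosmetic: making the monotonicity argument explicit (or, equivalently, pointing out that values of $\delta$ beyond $1-1/N$ are uninteresting because they describe estimators worse than uniform guessing). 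Everything else is a clean application of the standard identities for mutual information and Fano's inequality already used in Lemma~\ref{lemma:fano_opt}, only now in its sharper ``$h_2(p)+p\log(N-1)$'' form rather than the looser ``$(1-\delta)\log N-\log 2$'' bound.
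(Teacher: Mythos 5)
Your proof is correct and follows essentially the same route as the paper: bound the processing gap by $H(C\mid O)$ (your $H(C\mid\tau)\ge 0$ step is algebraically identical to the paper's use of $I(C;\tau)\le H(C)$) and then apply the strong form of Fano's inequality. The only difference is that you explicitly justify replacing $P_e(O)$ by $\delta$ via the monotonicity of $h_2(p)+p\log(N-1)$ on $[0,1-1/N]$, a point the paper's proof silently glosses over, so your version is if anything slightly more careful.
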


\begin{proof}
For a finite context alphabet, Fano’s inequality gives
\[
H(C\!\mid O)\le h_2(\delta)+\delta\log(N-1).
\]
Given the definition $I(C;O) = H(C) - H(C|O)$, we have
\[
I(C;O) \ge H(C) - h_2(\delta) - \delta\log(N-1).
\]
Combining it with the fact $I(C;\tau) \leq H(C)$ yields the stated bound.
\end{proof}

\smallskip\noindent
A small Bayes error on the window translates directly
into a small processing gap: when $P_e(O)=\delta$, at most
$\mathcal O(\delta)$ bits of context remain hidden in the tail. Thus, by task-specific window-engineering, one obtains an information-theoretic guarantee.

\subsection{Encoder Gap}\label{subsec:encoder_gap}
Once the \textsc{Processing Gap} has been engineered away, the remaining term in $\Delta I$ is the
\[
  \textsc{Encoder Gap}
  \;:=\;
  I(C;O)-I(C;Z),
\]
\emph{i.e.}, the context information present in the window \(O\) but
discarded by the encoder \(q_{\phi}(z\mid o)\).
Because \(C\!\rightarrow\!O\!\rightarrow\!Z\) is a Markov chain, the
gap is \textbf{exactly} the violation of \emph{observation sufficiency}
(Definition~\ref{def:obs_suff}).

\paragraph{Information Bottleneck View.} The term $I(C;O)$ is fixed once the window is chosen, so shrinking the
encoder gap means increasing $I(C;Z)$ while paying a rate
penalty~$I(Z;O)$ (see Remark~\ref{remark:lossy_suff}).  This penalty view leads to the IB objective\footnote{We use the
compression–weighted convention common in modern IB literature.}
\begin{equation}\label{eq:ib}
   \mathcal L_{\IB}(\phi)
   := \beta\,I(Z;O)\;-\;I(C;Z),
   \qquad 0<\beta<1 .
\end{equation}
Here, \(\beta\) controls how costly it is to keep extra bits from the window once sufficiency is reached.

\begin{proposition}[Observation sufficiency and the IB loss]
\label{prop:ib_eps_swap}
Let $C\!\to\!O\!\to\!Z$ and define
\(
  \varepsilon(\phi):=I(Z;O)-I(C;Z)\ge 0.
\)
Then for any encoder $q_{\phi}$ and $0<\beta<1$, we have
\begin{equation}\label{eq:ib_bound}
   0\;\le\;\varepsilon(\phi)
   \;\le\;
   \beta^{-1}\!\bigl[
       \mathcal L_{\IB}(\phi)+(\beta-1)\,I(C;O)
   \bigr].
\end{equation}
Moreover,
\[
   \varepsilon(\phi)=0
   \;\Longleftrightarrow\;
   \mathcal L_{\IB}(\phi)=-(1-\beta)\,I(C;O),
\]
so the encoder is observation-sufficient
iff it attains the global minimum of the IB loss.
\end{proposition}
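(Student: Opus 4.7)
My plan hinges on a single algebraic identity combined with two applications of the data-processing inequality to the chain $C\to O\to Z$. Step one is immediate: DPI gives $I(C;Z)\le I(O;Z)$, which is exactly $\varepsilon(\phi)\ge 0$, so the lower bound is free.

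Step two is to derive the key identity. I would substitute $I(Z;O)=\varepsilon(\phi)+I(C;Z)$ into the definition of $\mathcal L_{\IB}$ to obtain
\[
  \mathcal L_{\IB}(\phi)
  \;=\;\beta\,\varepsilon(\phi)\;-\;(1-\beta)\,I(C;Z),
\]
and then apply the second DPI bound $I(C;Z)\le I(C;O)$ to lower-bound the right-hand side by $\beta\,\varepsilon(\phi)-(1-\beta)\,I(C;O)$. Rearranging for $\varepsilon(\phi)$ reproduces the claimed upper bound, completing the two-sided inequality.

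Step three is the global-minimum characterization. Reading the identity in the other direction, $\mathcal L_{\IB}(\phi)=-(1-\beta)\,I(C;O)$ is equivalent to $\beta\,\varepsilon(\phi)=(1-\beta)\bigl(I(C;Z)-I(C;O)\bigr)$. The left side is non-negative while the right side is non-positive (again by DPI), so both must vanish, yielding \emph{simultaneously} $\varepsilon(\phi)=0$ and observation sufficiency $I(C;Z)=I(C;O)$. The converse is a one-line substitution, and the fact that $-(1-\beta)\,I(C;O)$ is genuinely the global minimum is already inherited from the lower bound derived in step two.

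The main obstacle is conceptual rather than technical: the iff in the ``moreover'' clause is subtle because $\varepsilon(\phi)=0$ by itself encodes $O\perp Z\mid C$, which is logically distinct from the observation-sufficiency condition $C\perp O\mid Z$. The two conditions coincide only when the IB loss actually attains its floor, forcing both DPI inequalities to be tight at the same time. I would take care to articulate this double-tightness explicitly rather than quietly conflating the two conditional-independence statements.
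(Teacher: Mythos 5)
Your derivation of the two-sided bound is essentially the paper's own: the same substitution $I(Z;O)=I(C;Z)+\varepsilon$ yielding the identity $\mathcal L_{\IB}=(\beta-1)I(C;Z)+\beta\,\varepsilon$, followed by the same single DPI step $I(C;Z)\le I(C;O)$. One bookkeeping point: this route gives $\varepsilon\le\beta^{-1}\bigl[\mathcal L_{\IB}+(1-\beta)I(C;O)\bigr]$, so the sign $(\beta-1)$ printed in~\eqref{eq:ib_bound} is a typo rather than what you "reproduce"; one confirms this by evaluating the bound at the floor $\mathcal L_{\IB}=-(1-\beta)I(C;O)$, where it must return $0$.

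Where you genuinely depart from the paper is the ``moreover'' clause, and your version is the correct one. The paper's proof asserts that $\varepsilon=0$ implies $I(C;Z)=I(C;O)$; this is false. Under the chain $C\to O\to Z$ one has $\varepsilon=I(O;Z\mid C)$, so $\varepsilon=0$ is the condition $O\perp\!\!\!\perp Z\mid C$, which is logically independent of observation sufficiency $C\perp\!\!\!\perp O\mid Z$, exactly as you flag. A constant code $Z$ is a concrete counterexample: $\varepsilon=0$ yet $\mathcal L_{\IB}=0>-(1-\beta)I(C;O)$ whenever $I(C;O)>0$, so the printed forward implication ``$\varepsilon=0\Rightarrow\mathcal L_{\IB}=-(1-\beta)I(C;O)$'' fails. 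Your ``double-tightness'' argument—that attaining the floor forces $\beta\,\varepsilon=(1-\beta)\bigl(I(C;Z)-I(C;O)\bigr)$ with a nonnegative left side and a nonpositive right side, hence $\varepsilon=0$ \emph{and} $I(C;Z)=I(C;O)$ simultaneously—is precisely the repair: the correct equivalence is between attainment of the global minimum $-(1-\beta)I(C;O)$ and the conjunction of the two tightness conditions, which still delivers the conclusion the paper actually uses downstream, namely that reaching the IB minimum certifies observation sufficiency. Just make sure the ``converse one-line substitution'' in your write-up is stated as requiring both conditions, not $\varepsilon=0$ alone.
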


\begin{proof}
Write $I(Z;O)=I(C;Z)+\varepsilon$.  Then
\[
  \mathcal L_{\IB}
  =\beta\bigl[I(C;Z)+\varepsilon\bigr]-I(C;Z)
  =(\beta-1)I(C;Z)+\beta\varepsilon.
\]
Rearranging
\[
  \varepsilon
  =\beta^{-1}\bigl[
      \mathcal L_{\IB}-(\beta-1)I(C;Z)
   \bigr].
\]
Since $I(C;Z)\le I(C;O)$ (by DPI) and $\beta-1<0$, replacing $I(C;Z)$ by
$I(C;O)$ yields the upper bound in~\eqref{eq:ib_bound}.  
If $\varepsilon=0$, then $I(C;Z)=I(C;O)$, so
$\mathcal L_{\IB}=-(1-\beta)I(C;O)$.  
Conversely, the IB loss achieves this value \emph{only} when
$\varepsilon=0$, completing the equivalence.
\end{proof}

\smallskip\noindent
Proposition~\ref{prop:ib_eps_swap} rewrites the encoder gap
\(I(C;O)-I(C;Z)\) as a single IB loss
\(\mathcal L_{\IB}(\phi)\). Its global minimum is \emph{equivalent} to observation sufficiency, and
the bound~\eqref{eq:ib_bound} tells us exactly how far any iterate remains from that target. Because \(\mathcal L_{\IB}\) depends only on the encoder and the
windowed data, it provides a \emph{stable, low-variance} learning signal
that keeps \(Z\) from collapsing (even while policy gradients or critic estimates fluctuate during training.)

\paragraph{IB annealing.} Unlike classical IB representation learning, $\beta$ here governs not only the compression versus prediction trade-off but also how \emph{robust} the learned code $Z$ will be for the policy (See Table~\ref{tab:beta_roles}). A practical annealing schedule exploits the effects of $\beta$ in two regimes: start with a small~\(\beta\) to encourage rich features for exploration, then gradually raise \(\beta\) so that the code $Z$ converges toward a \emph{minimal} observation sufficient representation for robustness.
\begin{table}[ht]
\centering
\footnotesize
\caption{Effect of the parameter $\beta$ in IB objective
$\mathcal L_{\IB}(\phi)$ ($0<\beta<1$).}
\vspace{2pt}
\renewcommand{\arraystretch}{1.1}
\begin{tabular}{@{}p{1cm}|p{4cm}|p{5cm}|p{4cm}@{}}
\toprule
\makecell{\textbf{$\beta$}} 
& \makecell{\textbf{IB update}\\\textbf{focus}} 
& \makecell{\textbf{Code ($Z$)}\\\textbf{properties}} 
& \makecell{\textbf{Control}\\\textbf{implication}} \\
\midrule
\centering
\(\beta\!\downarrow\!0^{+}\) 
& \makecell{Maximizing \(I(C;Z)\) \\\ (compression nearly free) }
& \makecell{Observation‐sufficient \\\ \emph{plus} extra ``beyond-$C$'' bits} 
& Rich features but risk overfitting \\[4pt]
\midrule
\centering
\(\beta\!\uparrow\!1^{-}\) 
& \makecell{Minimizing \(I(Z;O)\) \\\ (Strong compression)} 
& \makecell{\emph{Minimal} observation sufficient} 
& Minimal abstraction improves robustness but may remove helpful heuristics \\
\bottomrule
\end{tabular}
\label{tab:beta_roles}
\end{table}

\paragraph{Putting the pieces together.} With the processing gap bounded and the information bottleneck view on the encoder gap, the ELBO admits an envelope form
\begin{equation}\label{eq:obj}
    \mathrm{ELBO}
    \;\;\ge\;\;
    \underbrace{\mathcal J_{Z}(\theta)}_{\text{policy term}}
    \;-\;
    \underbrace{\min_{\phi}\mathcal L_{\IB}(\phi)}_{\text{encoder term}},
\end{equation}
where \(\mathcal L_{\IB}(\phi)=I(C;O)-I(C;Z)\) is the IB loss and
the processing constant has been absorbed into the bound.
Equation~\eqref{eq:obj} now cleanly \emph{separates} the optimization over policy parameters \(\theta\) from that over encoder parameters \(\phi\).

\smallskip
Crucially, the term \(\displaystyle\min_{\phi}\mathcal L_{\IB}\)
is \emph{NOT} a soft regularizer; it forces observation
sufficiency before the policy update. Intuitively, the nested optimization~\ref{eq:obj} searches for a
decoder–policy pair whose latent \(Z\) renders the contextual RL problem
\emph{separable}.  Once the IB loss hits its minimum
(\(\varepsilon(\phi)=0\)): the encoder renders the MDP on \((s,z)\) observed, so maximizing \(\mathcal J_{Z}\) solves the original contextual RL problem (approximately).

\medskip
\noindent\textbf{Preview: replay gap.}
The \textsc{Processing} and \textsc{Encoder} gaps are \textbf{\emph{intrinsic}} components of the information residual: they would exist even if the policy and encoder were updated on fresh data from the current policy distribution. In practice, however, both RL and representation learning implementations rely on a replay buffer that stores trajectories generated by \emph{past} policies.
The resulting mismatch introduces an additional \textbf{\emph{extrinsic}} error that we term as the Replay Gap.

\subsection{Replay Gap: information leakage due to experience replay}
\label{subsec:hidden_replay_gap}
Experience replay is the ``working heart'' of large-scale RL, but its statistical consequences differ sharply for \emph{value} estimates and \emph{information-theoretic} objectives.

\paragraph{Value estimate.} The reuse is safe for the \emph{value} terms because the
returns $\E_{q_\theta}[R_c]$, state values $V_\theta(s,c)$ and
advantages $A_\theta(s,a,c)$ are \textbf{linear} in the trajectory law $q_\theta(c,\tau)$.  Re-using off-policy data
\((c,\tau)\sim q_{\mathcal B}\) is therefore unbiased under standard importance weights \(w=q_{\theta}/q_{\mathcal B}\). Techniques such as clipping or V-trace style corrections keep the variance finite~\citep{precup2000eligibility,munos2016safe,espeholt2018impala}.

\paragraph{MI estimate.}
The mutual information is a
\emph{non–linear} functional of the underlying trajectory law
\(q\). Evaluating it with stale data incurs the \textbf{replay gap}:
\begin{equation}\label{eq:replay gap}
  \Gamma_{\textsc{replay}}
  \;:=\;
  \bigl|\,I_{q_\theta}(C;X)-I_{q_{\mathcal B}}(C;X)\bigr|,
\end{equation}
where \(X\) can be the window \(O\), the code \(Z\), or any further feature derived from the trajectory. This implies that even a small drift in likelihood ratios (i.e., \(\log w\)) perturbs every log term in a multiplicative, non-canceling way, so \emph{stale} context clues can ``leak'' into the estimator and be spuriously exploited by
the policy.

A cheerful fact about the replay gap is that when the behavior policy
drifts only slightly from one update to the next,
$q_{\!\mathcal B}\!\approx\!q_{\theta}$ and, under \emph{any} sensible
statistical metric, the bias
$\Gamma_{\textsc{replay}}$ must be small. We now make this intuition concrete:

\begin{proposition}[Replay gap bound under importance weights]
\label{prop:local_replay_gap}
Let $q_{\theta}(c,\tau)$ be the current on-policy distribution
over contexts $c\in\mathcal C$ ($|\mathcal C|=N > 2$) and trajectories
$\tau$, and let $q_{\mathcal B}(c,\tau)$ denote the replay-buffer
distribution.  
Assume the trajectory–level importance weights are clipped
\[
   w(c,\tau)
   :=\frac{q_{\theta}(c,\tau)}{q_{\!\mathcal B}(c,\tau)}
   \;\in\;[1-\epsilon,\,1+\epsilon],
   \qquad 0<\epsilon<1 .
\]

\noindent
An auxiliary (processed) variable $X$ is generated from $\tau$ by a Markov kernel. Then the replay gap satisfies
\[
   \Gamma_{\textsc{replay}}
   =|I_{q_\theta}(C;X)-I_{q_{\!\mathcal B}}(C;X)|
   \;\le\;
   \frac{\epsilon}{2}\log(N-1)+h_2(\frac{\epsilon}{2}),
\]
where $h_2$ is the binary entropy.
\end{proposition}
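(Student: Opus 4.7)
The plan is to translate the pointwise bound on importance weights into a total-variation (TV) bound on the joint $(C,\tau)$ law, propagate it through the Markov kernel producing $X$, and close the argument with a Fannes-Audenaert style continuity bound on the finite context alphabet.

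First, I would note that because $p(c)$ is environmentally determined, both $q_\theta$ and $q_{\mathcal B}$ share the $C$-marginal; the joint ratio reduces to $w=q_\theta(\tau\mid c)/q_{\mathcal B}(\tau\mid c)$, and a direct computation gives
\[
   2\,\mathrm{TV}(q_\theta,q_{\mathcal B})
   \;=\;\int |w-1|\,q_{\mathcal B}(c,\tau)\,dc\,d\tau
   \;\le\;\epsilon,
\]
so $\mathrm{TV}(q_\theta,q_{\mathcal B})\le\epsilon/2$. Since $X$ is generated from $\tau$ by a policy-independent Markov kernel, the data-processing inequality for TV transfers this bound to the processed law: $\mathrm{TV}\bigl(q_\theta(c,x),q_{\mathcal B}(c,x)\bigr)\le\epsilon/2$. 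The common $C$-marginal also forces $H_{q_\theta}(C)=H_{q_{\mathcal B}}(C)$, so writing $I(C;X)=H(C)-H(C\mid X)$ collapses the replay gap to a single conditional-entropy difference, $\Gamma_{\textsc{replay}}=\bigl|H_{q_\theta}(C\mid X)-H_{q_{\mathcal B}}(C\mid X)\bigr|$.

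To finish, I would invoke a sharp Fannes-Audenaert style continuity bound on the $N$-letter alphabet $\mathcal C$: for two joints on $\mathcal C\times\mathcal X$ with identical $C$-marginal and joint TV at most $t\le 1-1/N$, the conditional entropies on $C$ differ by at most $t\log(N-1)+h_2(t)$. Setting $t=\epsilon/2$ (valid since $\epsilon<1$ and $N>2$ give $\epsilon/2<1/2\le 1-1/N$) then yields the stated bound.

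The main obstacle is establishing this last continuity inequality with exactly the advertised constants. The vanilla Alicki-Fannes-Winter bound only gives $2t\log N+O(h_2(t))$, losing a factor of two because it does not exploit the shared $C$-marginal. I expect the cleanest route is a coupling argument that does use it: since $P_C=Q_C$, one can maximally couple the two joints with $C_1=C_2$ almost surely by coupling the conditionals $P(\cdot\mid c)$ and $Q(\cdot\mid c)$ separately for each $c$, confining the disagreement event entirely to the $X$-coordinate with probability at most $\epsilon/2$, and then reduce the conditional-entropy gap to a single-variable Fannes-Audenaert bound on $\mathcal C$ via an averaged-posterior construction. A sanity check on the binary extremum, where $|H_P(C\mid X)-H_Q(C\mid X)|$ attains $\log N$ exactly when $t=1-1/N$, confirms that the claimed constants are tight.
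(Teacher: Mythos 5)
Your proposal follows essentially the same route as the paper's proof: convert the clipped importance weights into a total-variation bound of $\epsilon/2$ on the joint law, push it through the $(c,\tau)\mapsto(c,x)$ channel via the data-processing inequality for TV, and finish with a Fannes--Audenaert-type continuity bound on the conditional entropy $H(C\mid X)$ after cancelling the common marginal entropy $H(C)$. The only difference is that you additionally sketch how to establish the conditional-entropy continuity inequality with the stated constants (via a coupling exploiting the shared $C$-marginal), whereas the paper simply cites this bound from the literature.
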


\begin{proof}
\textbf{Step 1. From importance weights to a TV bound.}
Recall the total variation (TV) distance,
\(
   \|P-Q\|_{\mathrm{TV}}
   =\frac12\sum_{c,\tau}|P(c,\tau)-Q(c,\tau)|.
\)
With the clipped likelihood ratio
\(
   w(c,\tau)=q_{\theta}(c,\tau)/q_{\mathcal B}(c,\tau)
   \in[1-\epsilon,1+\epsilon],
\)
we have
\[
   |q_{\theta}(c,\tau)-q_{\!\mathcal B}(c,\tau)|
   =|w(c,\tau)-1|\,q_{\mathcal B}(c,\tau)
   \;\le\;\epsilon\,q_{\mathcal B}(c,\tau).
\]
Summing over $(c,\tau)$ gives a TV bound:
\[
   \|q_{\theta}-q_{\!\mathcal B}\|_{\mathrm{TV}}
   \le\frac12\sum_{c,\tau}\epsilon\,q_{\mathcal B}(c,\tau)
   =\frac{\epsilon}{2}.
\]

\smallskip
\textbf{Step 2. Propagate the TV bound via data processing.}  
Introduce the \emph{processing channel}
\[
   K:\;(c,\tau)\;\longmapsto\;(c,x),
   \qquad
   K(c,x\mid c,\tau)=\kappa(x\mid\tau)\,\mathbf 1_{\{c\}},
\]
where the context label is copied, while \(x\) is obtained through
the Markov kernel \(\kappa(\cdot\mid\tau)\). Because the \emph{same} channel is applied to both measures, the strong
data–processing inequality for TV
\citep{polyanskiy2017strong} gives
\[
  \|q^{CX}_{\theta}-q^{CX}_{\mathcal B}\|_{\mathrm{TV}}
  \le\eta_K \|q_{\theta}-q_{\mathcal B}\|_{\mathrm{TV}}
  \le \|q_{\theta}-q_{\mathcal B}\|_{\mathrm{TV}},
\]
where the $\eta_K \leq 1$ is the Dobrushin coefficient.

\smallskip
\textbf{Step 3.  Continuity of entropy.}  
Write
$I(C;X)=H(C)-H(C\mid X)$.  The marginal entropy $H(C)$ is fixed by definition, so only the conditional term varies.
For two joint laws $P,Q$ on a product alphabet
$\mathcal C\times\mathcal X$ with
$\mathrm{TV}(P,Q)\le\Delta \leq 1 - \frac{1}{|\mathcal{C}|}$, the continuity of (conditional) entropy gives~\cite{cover1999elements, berta2025continuity}
\[
  |H_P(C\mid X)-H_Q(C\mid X)|
  \;\le\;
  \Delta\,\log(|\mathcal C|-1)+h_2(\Delta).
\]
Taking $\Delta=\epsilon/2$ yields the inequality:
\[
   \Gamma_{\textsc{replay}}
   =|I_{q_\theta}(C;X)-I_{q_{\!\mathcal B}}(C;X)|
   \;\le\;
   \frac{\epsilon}{2}\log(N-1)+h_2(\frac{\epsilon}{2}).
\]
\end{proof}

\noindent
Proposition~\ref{prop:local_replay_gap} says that the replay bias
\(\Gamma_{\textsc{replay}}\) grows \emph{super-linearly} with the policy
drift~\(\epsilon\) and is amplified by a
\(\log \bigl(|\mathcal C|\bigr)\) factor.  When the context alphabet is
large (or effectively continuous), this amplification is not negligible.  Two practical rules follow immediately:
\begin{enumerate}
\item \textbf{Keep the observation module \emph{``on-policy''}:}
Train the encoder $q_\phi(z|o)$ and sample the latent code $Z$ from the induced prior only on freshly collected trajectories (i.e., small distributional mismatch). This ensures the optimizable encoder gap, not replay bias, dominates the information residual.
\item \textbf{Control the \emph{effective} context alphabet:}
Limit $N$ during optimization, which can be done, e.g., via coarse quantization, curriculum strategy that progressively refines context resolution, or task-specific grouping of similar contexts. A smaller effective alphabet provably reduces the $\log N$ amplification in the replay bound.
\end{enumerate}

\noindent
Combined with the processing and encoder bounds, the replay result completes the error decomposition for the information residual $\Delta I$: every term now is either
\textit{intrinsically limited} (pre-processing),
\textit{directly optimized} (IB loss),
or \textit{provably bounded} (controlled replay bias).



\section{Algorithm: Bottlenecked Contextual Policy Optimization}
\label{sec:bcpo}
\subsection{From theory to algorithm}\label{subsec:roadmap}
Before detailing the implementation, we summarize the theoretical
results of Sections~\ref{subsec:window_gap}–\ref{subsec:hidden_replay_gap}
into a \textbf{roadmap} that clarifies

\begin{enumerate}
\item \emph{what} must be optimized,
\item \emph{why} those terms suffice for near–optimal control, and
\item \emph{how} the three information gaps are detected and mitigated.
\end{enumerate}

\paragraph{Error decomposition.}
Proposition~\ref{prop:ib_eps_swap}, Corollary~\ref{coro:fano}, and
Proposition~\ref{prop:local_replay_gap} tighten the ELBO into
\[
  \Delta I
  \;=\;
  \boxed{\textsc{Processing}}
  \;+\;
  \boxed{\textsc{Encoder}}
  \;+\;
  \boxed{\textsc{Replay}} ,
\]
clearly isolating \emph{where} optimization effort is required.

\paragraph{Nested objective \& “inner–first’’ rule.}
Substituting this decomposition yields the two–level programming
\[
  \max_{\theta}\Bigl[\,
     \mathcal J_{Z}(\theta)
     \;-\;
     \underbrace{\min_{\phi}\,
                  \mathcal L_{\mathrm{IB}}(\phi)}_{\text{inner}}
  \Bigr] \;+\;
     \underbrace{\Delta_{\textsc{processing}}
     \;-\;
     \Gamma_{\textsc{replay}}}_{\text{bounded}},
\]
where both residual terms are \emph{a-priori} bounded during
optimization.  Minimizing \(\mathcal L_{\IB}\) first drives the code \(Z\) to {observation sufficiency}; the outer maximization then makes the pair ($Z$-conditioned policy, encoder) to be {control sufficient} up to the bounded errors.

\paragraph{Gap diagnostics.}
Each term is paired with an explicit monitoring:

\smallskip
\noindent
\begin{tabular}{@{}p{0.28\textwidth}@{\hspace{0.3em}}p{0.68\textwidth}@{}}
\textsc{Processing gap} &
The IB objective already tracks the empirical mutual
information $\widehat I_\phi(Z;C)$.
If the running average $\bar I_\phi(Z;C)$ ever drops
\emph{below} the Bayes–optimal lower bound from
Lemma~\ref{lemma:fano_opt}, the observation window is too
narrow (not sufficient).\\[2pt]
\textsc{Encoder gap} &
Track $\mathcal L_{\mathrm{IB}}(\phi)$.  Convergence to the
$\beta$–dependent minimum signals that~$Z$ is
observation–sufficient at the current compression level.\\[2pt]
\textsc{Replay gap} &
Clip importance weights to
$w\!\in\![1-\epsilon,\,1+\epsilon]$ and restrict
gradients to the newest data.
A coarse and exploitive curriculum over the context set
keeps the effective alphabet size small.
\end{tabular}

\paragraph{Annealing the bottleneck.}
Table~\ref{tab:beta_roles} shows that a small compression weight
$\beta\!\ll\!1$ permits \emph{redundant} codes that accelerate
exploration while the buffer is nearly on–policy. We therefore anneal
$\beta_t\uparrow1$ in order for a smooth transition from
\emph{redundant–} to \emph{minimal–code} training.

\subsection{BCPO: Bottlenecked Contextual Policy Optimization}
\label{subsec:bcpo}
Algorithm~\ref{alg:bcpo} instantiates the roadmap in a
\emph{single training loop} that couples a variational information
bottleneck (VIB)~\citep{poole2019variational, alemi2016deep} with Soft Actor–Critic (SAC)~\citep{pmlr-v80-haarnoja18b}.

\begin{algorithm}[t]
\caption{\textbf{BCPO}.  \emph{Notation:}
$o$: size–$k$ observation window;
$c$: task/context drawn from curriculum sampler $p_{\mathrm{ctx}}$;
$z$: latent code, $z\!\sim\!q_\phi(z\!\mid\!o)$;
$\mathcal D_{\mathrm{rec}}$: freshest $\gamma|\mathcal D|$ tuples;
$\beta_t$: IB weight, annealed by $\Delta_\beta$;
$N_{\mathrm{enc}}\!\gg\!N_{\mathrm{rl}}$: enforces the inner–first rule.}
\label{alg:bcpo}
\begin{algorithmic}[1]\small
\Require \textbf{Environment / Sampler:}
        context set $\mathcal C$; sampler $p_{\mathrm{ctx}}$
\Require \textbf{Parametric models:}
        encoder $q_\phi(z\!\mid\!o)$;
        MI estimator $I_\phi(Z;C)$;
        actor $\pi_\theta$;
        critics $Q_{\psi_{1,2}}$
\Require \textbf{Data buffers:}
        replay $\mathcal D$;\, recency ratio $\gamma$
\Require \textbf{Hyper–parameters:}
        warm-up episodes $W$;\, epochs $T$;\,
        encoder steps $N_{\mathrm{enc}}\!\gg\!N_{\mathrm{rl}}$;\,
        learning rate $\eta_\phi$;\,
        IB schedule $\{\beta_t\}_{t=0}^{T}$
\Statex\textcolor{purple}{\bf Warm–up (fill buffer \& pre–train encoder)}
\For{$e=1$ \textbf{to} $W$} \Comment{random policy}
    \State collect episode $\{(s,a,r,s',o',c)\}$ and push to $\mathcal D$
\EndFor
\State optimize $\phi$ on $\mathcal D$ for $N_{\mathrm{enc}}$ steps
\Statex\textcolor{purple}{\bf Main loop}
\For{$t=1$ \textbf{to} $T$}
    \State $c\sim p_{\mathrm{ctx}}$;\, reset $M_c$
    \Repeat
        \State encode $z\!\sim\!q_\phi(\,\cdot\mid o)$;\,
               take $a\!\sim\!\pi_\theta(\,\cdot\mid s,z)$
        \State env.\ step $\rightarrow(s',r,o')$;\,
               store $(s,a,r,s',o',c)$ in $\mathcal D$ \Comment{$z$ \underline{\emph{not}} stored}
    \Until{episode ends}
    \State $\mathcal D_{\mathrm{rec}}\leftarrow$ newest $\gamma|\mathcal D|$ tuples
    \Statex\textcolor{orange}{\bf\underline{Inner step: Information bottleneck loss}} \Comment{Observation suff.}
    \For{$i=1$ \textbf{to} $N_{\mathrm{enc}}$}
        \State sample $B\subset\mathcal D_{\mathrm{rec}}$
        \State $\displaystyle
               \phi \leftarrow \phi -
               \eta_\phi\nabla_\phi\!\bigl[
               \mathcal{L}_{\mathrm{IB}}(\phi)
               \bigr]_{B}$
    \EndFor
    \Statex\textcolor{cyan}{\bf\underline{Outer step: MaxEnt RL with updated codes}}
    \For{$j=1$ \textbf{to} $N_{\mathrm{rl}}$}
        \State sample $B\subset\mathcal D$;\, re-encode each $(o,c)\in B$
        \State update $Q_{\psi_{1,2}}$ and $\pi_\theta$ with SAC losses
    \EndFor
    \State $\beta_{t+1}\gets\beta_t+\Delta_\beta$ 
\EndFor
\end{algorithmic}
\end{algorithm}

\paragraph{Context curriculum \& online encoding.}
Beyond the core steps supported by the roadmap in Section~\ref{subsec:roadmap}, BCPO adds two pragmatic components that
greatly improve sample efficiency:

\begin{enumerate}
\item \textbf{Context sampler.}  
      When $\mathcal C$ is large or continuous, we \emph{coarse–grain}
      it.  During training, all observed contexts are clustered into
      $N_{\mathrm{bin}}(t)$ cells with a user‐defined resolution~$\varepsilon_{\mathrm{ctx}}$.  
      The sampler $p_{\mathrm{ctx}}$ samples a cell in
      proportion to its running mean return  
      (i.e., a simple \emph{easy-first} curriculum), then draws a context from that cell.  
      This keeps the \emph{effective} alphabet
      $N_{\mathrm{eff}}\!=\!N_{\mathrm{bin}}(t)$ small while gradually exposing harder contexts as performance improves.

\item \textbf{On-the-fly encoding.}  
      Latent codes are \emph{never} stored in $\mathcal D$.  
      Whenever a tuple $(o,c)$ is sampled, we re-encode
      $z\!\sim\!q_\phi(\,\cdot\mid o)$ with the \emph{current} encoder.
      This exploits the separation structure: the actor–critic enjoys
      standard off-policy updates, yet always sees fresh, nearly
      sufficient state information.
\end{enumerate}

\paragraph{Inner step: optimizing the information bottleneck.}
Each of the $N_{\mathrm{enc}}$ encoder minimizes the variational IB loss 
\begin{equation}
     \mathcal L_{\IB}(\phi)
     = \beta_t\,D_{\mathrm{KL}}\!\bigl[
         q_\phi(z\mid o)\,\big\|\,r(z)
       \bigr]\;
       -\;
       \hat I_{\mathrm{NCE}}(C;Z)
   \qquad \beta_t\!\in\!(0,1),
\end{equation}
where  
\begin{itemize}
\item $r(z)=\mathcal N(0,I)$ is an isotropic Gaussian prior;
\item $D_{\mathrm{KL}}$ is evaluated in closed form under the
      re-parameterization 
      $z=\mu_\phi(o)+\sigma_\phi(o)\odot\epsilon$,
      $\epsilon\!\sim\!\mathcal N(0,I)$~\cite{kingma2013auto};
\item $\hat I_{\mathrm{NCE}}$ is an InfoNCE estimator~\citep{oord2018representation}.  
      Because the true context $c$ is hidden, we implement the contrastive
      $(\text{positive},\text{negative})$ pairing in a
      \textit{semi-supervised} fashion:  
      two codes $z^{+},z^{-}$ are drawn from different mini-batch
      windows $o^{+},o^{-}$ that come from \emph{different} contexts $c^{+}, c^{-}$. The loss, therefore, maximizes the agreement of codes within the same
      context while minimizing agreement across contexts.
\end{itemize}
All estimator details are in the Appendix.

\paragraph{Outer step: optimizing the augmented SAC.}
SAC proceeds on the augmented state $\tilde s=(s,z)$.  Twin critics
$Q_{\psi_{1}},Q_{\psi_{2}}$ and a Gaussian actor $\pi_\theta$ are
trained using
\begin{align}
\mathcal L_{\text{critic}}
  &=
  \E_{\tilde s,a,r,\tilde s'}
    \Bigl[\!
      \bigl(Q_{\psi}(\tilde s,a)
            -[\,r+\gamma\,\hat V_{\bar\psi}(\tilde s')]\bigr)^{\!2}
    \Bigr], \\[2pt]
\mathcal L_{\text{actor}}
  &=
  \E_{\tilde s}
    \Bigl[
      \alpha\,\mathcal H\!\bigl(\pi_\theta(\cdot\mid\tilde s)\bigr)
      -Q_{\psi}(\tilde s,a)\Bigm|\,
      a\!\sim\!\pi_\theta(\cdot\mid\tilde s)
    \Bigr],
\end{align}
where $\hat V_{\bar\psi}$ is the averaged target
network that stabilizes value targets
\citep{munos2016safe, lillicrap2015continuous}.  Gradients are \emph{stopped} through $z$ in $\mathcal L_{\text{actor}}$ to avoid
feedback loops with the concurrently updating encoder. Implementation details, including hyperparameter values and baseline performance, are in the Appendix.

\section{Experiments}\label{sec:expr}
We empirically assess \textbf{BCPO} on a suite of continuous–control
benchmarks designed with latent physical parameters.  The evaluation is
conducted to answer four guiding questions:
\begin{itemize}
\item \textbf{Sample efficiency \& Performance.}  
      How does BCPO’s learning curve compare with other RL baselines?
\item \textbf{Out-of-training generalization.}  
      Does the dual \emph{observation–policy} structure transfer to
      \emph{unseen} context regimes without fine-tuning?
\item \textbf{IB-annealing.}  
      How sensitive is performance to the heuristic $\beta$-schedule
    that gradually tightens the information bottleneck?  What role
    does annealing play in policy optimization?
\item \textbf{Theory in practice.} To what extent do the empirical results and ablations validate the theoretical results, and what new insights do they offer?
\end{itemize}

\subsection{Baselines and experimental set-up} \label{subsec:benchmarks}

\textbf{Baselines.}  We benchmark against \textbf{seven} carefully
chosen algorithms, each representative of a strategy for
handling context variation.  All baselines inherit the \emph{same}
SAC backbone, optimizer, and hyperparameters in order to isolate the effect of their context–handling logic.  For clarity, we partition them into two families:
\begin{itemize}
\item \textbf{Implicit-context policies.}  
      These methods \emph{embed} context variability directly in the
      policy by broad exposure, but
      \emph{do not learn} an explicit observation module.
      \begin{itemize}
      \item \textbf{Domain Randomization (DR)}
            \citep{tobin2017domain,Peng2018}: uniform sampling from the
            full continuous context range every episode.
      \item \textbf{Round-Robin (RR)}
            \citep{speck2021learning}: cyclic traversal of a fixed,
            discrete context set.
      \item \textbf{Self-Paced Context Expansion (SPaCE)}
            \citep{eimer2021self}: value-guided curriculum over a
            discrete context grid.
      \item \textbf{Self-Paced DRL (SPDRL)}
            \citep{klink2020self}: curriculum learning on a continuous
            context distribution.
      \end{itemize}

\item \textbf{Explicit-context policies.}  
      These methods condition the policy with an \emph{external} context
      signal:
      \begin{itemize}
      \item \textbf{Parameter Identification (MSE)}
            \citep{kumar2021rma}: jointly trains a system-ID network
            (i.e., minimizing MSE loss) and an estimated context-conditioned policy.
      \item \textbf{Observation Augmentation (ObsAug)}
            \citep{yang2021learning,cao2022cloud}: concatenates a
            fixed \(k\)-step \((s,a,r)\) history to the current state. In the language of our framework, this is a \emph{pure
            window} method, which effectively skips the encoding stage.
    
      \item \textbf{Context-based Meta RL (PEARL)} \citep{rakelly2019efficient}: a
            context-based meta-RL algorithm that infers a latent
            variable from recent transitions for rapid policy
            adaptation. Architecturally, it's similar to BCPO but
            \emph{optimizes} the context posterior with a KL
            \emph{regularizer} rather than an information-residual
            objective. In the terminology of our framework, PEARL collapses the \textsc{Processing Gap} into an \emph{overly simplified} form: it treats the $k$ tuples inside the observation window \(O\) as a \emph{mixture-of-experts} factorization. It, therefore, discards temporal correlations that carry context information, so the true processing gap is underestimated and the residual bias remains uncorrected.
      \end{itemize}
\end{itemize}

\vspace{3pt}\noindent
\textbf{Implementation details.} All explicit-context baselines \textbf{MSE,
ObsAug, PEARL} and our \textbf{BCPO} share the same
\(\,k=10\) step context window containing state, action, and reward
information.  \textbf{PEARL} and \textbf{BCPO} also employ the \emph{same} probabilistic
encoder architecture and isotropic Gaussian prior to ensure a fair comparison. All implementation details are provided in the Appendix. 

\begin{figure}[b]
    \centering
    \includegraphics[width=0.19\linewidth]{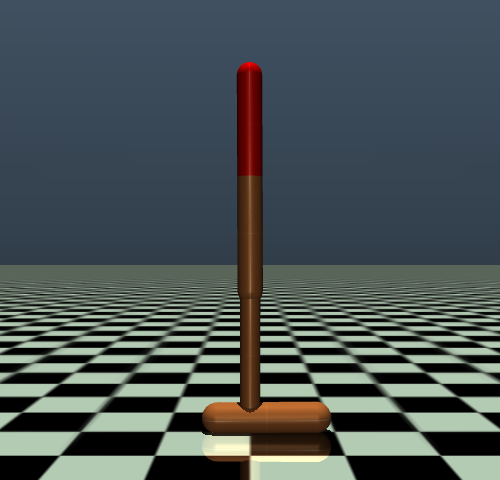}
    \includegraphics[width=0.19\linewidth]{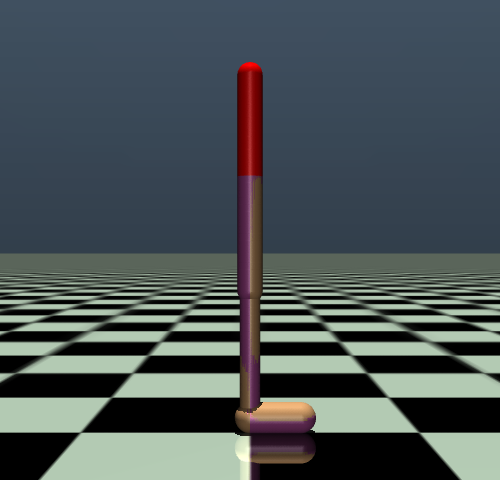}
    \includegraphics[width=0.19\linewidth]{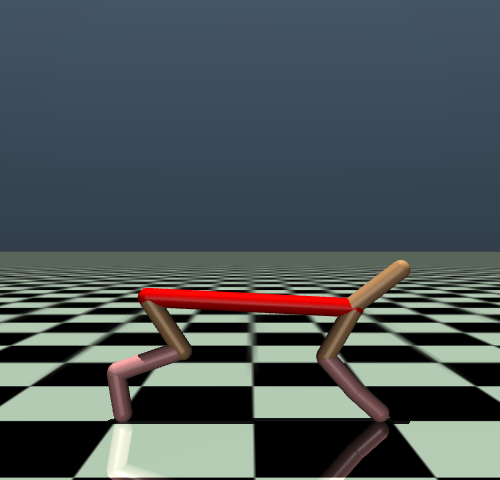}
    \includegraphics[width=0.19\linewidth]{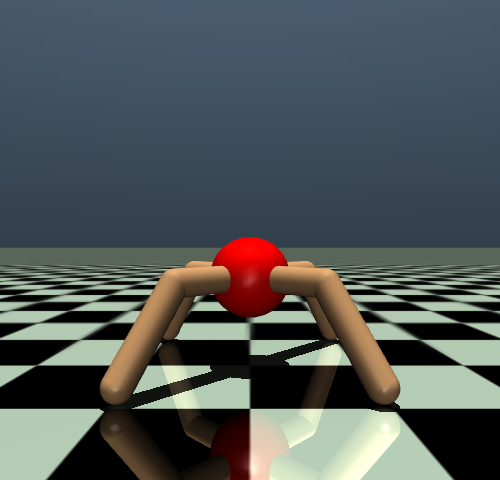}
    \includegraphics[width=0.19\linewidth]{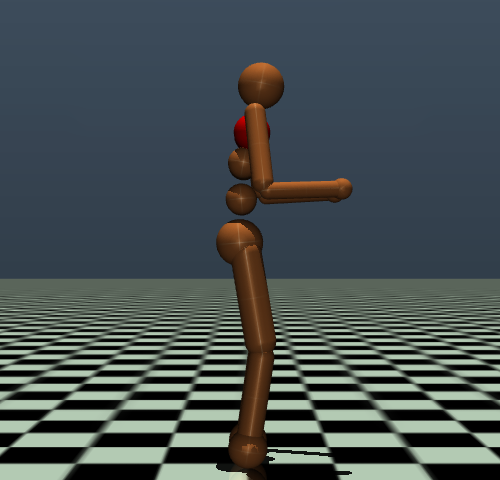}
    \caption{\textbf{MuJoCo environments with mass–scaled body parts.}  
    We vary a global \(\kappa\) on only the red-highlighted links in
    each agent (e.g., torso for
    \texttt{HalfCheetah}). \texttt{CartPole} (not shown in the figure) scales pole mass, pole length, and cart mass simultaneously.}
    \label{fig:enter-label}
\end{figure}

\vspace{3pt}\noindent
\textbf{Task and context specifications.} All environments are standard MuJoCo tasks whose dynamics depend on
\emph{hidden} physical parameters that once drawn, remain fixed for the episode.  Depending on the baseline, contexts are drawn from either a
\textit{continuous} sampler or a \textit{discrete} curriculum during
training, and performance is evaluated on a wider
out-of-distribution (OOD) range.  
Unless otherwise stated, a global \emph{scaling} factor~\(\kappa\)
multiplies every context-relevant physical parameter:
\[
  \kappa \in 
  \mathcal C_{\text{train}}=[0.75,\,2.00], 
  \qquad 
  \kappa_{\text{test}} \in \mathcal C_{\text{test}} = [0.50,\,2.50].
\]
During training, each algorithm samples \(\kappa\) according to its own
rule (uniform, curriculum, etc.). At evaluation time, contexts are drawn \emph{uniformly} from the full continuous range, and we report the average return to ensure a fair comparison:

\vspace{4pt}
\begin{enumerate}
\item \texttt{CartPole} \;(3-D context).  
      The base parameter vector
      \(\bigl(m_{\text{pole}},\,\ell_{\text{pole}},\,m_{\text{cart}}\bigr)\)
      is multiplied by a scalar \(\kappa \sim \mathcal C_{\text{train}}\). 
      The OOD set uses the same scaling, but with
      \(\kappa \sim \mathcal C_{\text{test}}\).

\item \texttt{Hopper}, \texttt{Walker2d}, \texttt{HalfCheetah},
      \texttt{Ant}, \texttt{Humanoid} \;(1-D context).  
      A single global mass scale \(\kappa\) is drawn per episode:
      \[
        \kappa \sim
        \begin{cases}
          [0.75,\,2.00], & \textit{continuous train},\\[3pt]
          \{0.75,\,1.08,\,1.42,\,1.75\}, & \textit{discrete train},\\[3pt]
          [0.50,\,2.50], & \textit{test}.
        \end{cases}
      \]
\end{enumerate}

\begin{figure}[ht]
\begin{center}
\centerline{\includegraphics[width=0.95\textwidth]{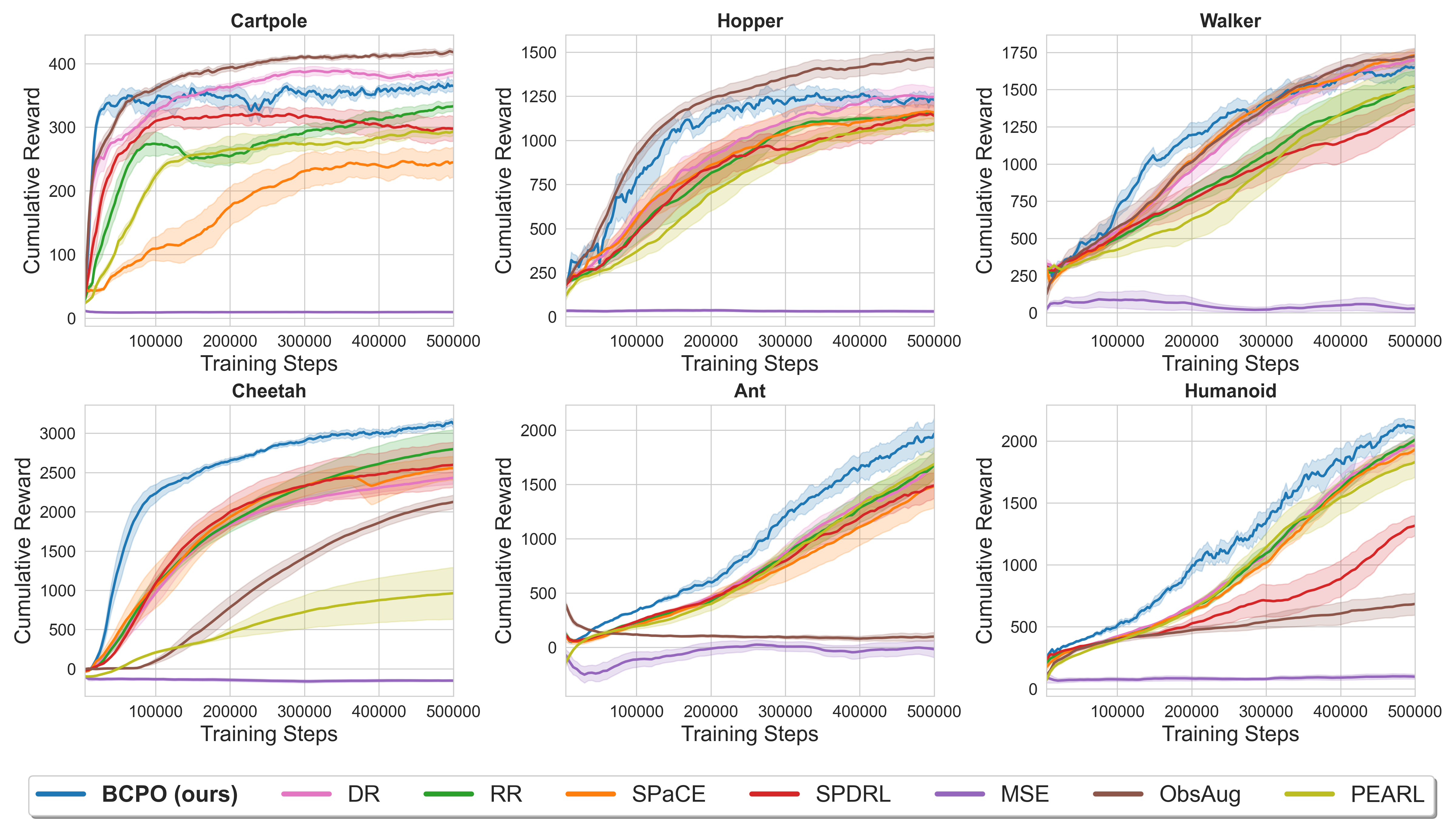}}
\caption{\textbf{Evaluation Performance.} Returns vs. Training steps. All methods are evaluated on testing contexts every 5000 training steps. Results averaged over 5 seeds.}
\label{fig:eval}
\end{center}
\end{figure}

\subsection{Main results}\label{subsec:main results}
Figure~\ref{fig:eval} reports evaluation learning curves, and
Table~\ref{tab:testing-results} summarizes the \emph{asymptotic} test
returns at ~$5\times10^{5}$ steps. Complete training results are in
the Appendix (see Figure~\ref{fig:training_result_plot} and Table~\ref{tab:training-results}).

\paragraph{Performance \& sample efficiency.}
BCPO dominates five of the six benchmarks and a close second on the
two easiest tasks.  Using a common ``expert'' threshold 80\% of BCPO’s final score as the success target, BCPO reaches it in  
\textbf{30 k / 60 k / 90 k / 70 k / 110 k / 140 k} environment steps on \texttt{CartPole, Hopper, Walker2d, Cheetah, Ant} and \texttt{Humanoid}, respectively.  In
contrast, Domain Randomization requires 2-3\(\times\)more data. Curriculum methods (\textbf{SPaCE, SPDRL}) are another
30–50 k steps slower.

\begin{table}[ht]
  \centering
    \caption{Average episodic return (mean $\pm$ coefficient of variation) on
           six MuJoCo tasks, all evaluated under the same continuous–context
           protocol (see Sec.~\ref{subsec:benchmarks}).\;
           Higher is better.
           The best score in each column is \textbf{bold} and \best{dark-shaded};
           the 2\textsuperscript{nd} and 3\textsuperscript{rd} are
           \second{medium} and \third{light} shaded, respectively.}
  \label{tab:testing-results}
  \resizebox{\textwidth}{!}{
  \begin{tabular}{@{}lcccccc@{}}
    \toprule
    \textbf{Method} & CartPole & Hopper & Walker & Cheetah & Ant & Humanoid \\
    \midrule
    BCPO(\textbf{ours}) &
      \third{$365.18\!\pm\!0.11$} &
      \third{$1233.72\!\pm\!0.11$} &
      $1652.02\!\pm\!0.09$ &
      \best{$\mathbf{3129.05\!\pm\!0.07}$} &
      \best{$\mathbf{1964.57\!\pm\!0.17}$} &
      \best{$\mathbf{2105.60\!\pm\!0.08}$} \\
    \midrule
    DR &
      \second{$383.56\!\pm\!0.09$} &
      \second{$1266.34\!\pm\!0.15$} &
      \third{$1715.50\!\pm\!0.10$} &
      $2449.73\!\pm\!0.07$ &
      $1685.28\!\pm\!0.31$ &
      \third{$2018.49\!\pm\!0.13$} \\

    RR &
      $336.83\!\pm\!0.11$ &
      $1153.56\!\pm\!0.10$ &
      $1544.74\!\pm\!0.17$ &
      \second{$2832.10\!\pm\!0.18$} &
      \third{$1716.06\!\pm\!0.22$} &
      \second{$2080.35\!\pm\!0.11$} \\

    SPaCE &
      $243.05\!\pm\!0.28$ &
      $1170.31\!\pm\!0.10$ &
      \best{$\mathbf{1762.86\!\pm\!0.10}$} &
      $2598.73\!\pm\!0.11$ &
      $1530.62\!\pm\!0.29$ &
      $1998.17\!\pm\!0.13$ \\

    SPDRL &
      $293.64\!\pm\!0.21$ &
      $1153.27\!\pm\!0.24$ &
      $1389.91\!\pm\!0.18$ &
      \third{$2614.98\!\pm\!0.24$} &
      $1539.81\!\pm\!0.26$ &
      $1386.59\!\pm\!0.25$ \\

    MSE &
       $9.62\!\pm\!0.14$ &
      $31.11\!\pm\!0.52$ &
      $31.80\!\pm\!2.86$ &
     $-149.57\!\pm\!0.20$ &
      $-3.96\!\pm\!43.07$ &
      $101.24\!\pm\!0.51$ \\

    ObsAug &
      \best{$\mathbf{418.77\!\pm\!0.07}$} &
      \best{$\mathbf{1477.30\!\pm\!0.11}$} &
      \second{$1742.24\!\pm\!0.10$} &
      $2178.13\!\pm\!0.09$ &
      $102.03\!\pm\!0.96$ &
      $698.31\!\pm\!0.28$ \\

    PEARL &
      $295.90\!\pm\!0.14$ &
      $1103.79\!\pm\!0.12$ &
      $1558.95\!\pm\!0.19$ &
      $975.82\!\pm\!0.69$ &
      \second{$1740.66\!\pm\!0.23$} &
      $1875.71\!\pm\!0.15$ \\
    \bottomrule
  \end{tabular}
  }
  \vspace{-0.5em}
  \footnotesize Results averaged over five random seeds.
\end{table}

\paragraph{Method-specific observations.} \textbf{ObsAug} excels on \texttt{CartPole/Hopper} where a 10-step history adds \(<\)50 dimensions, but lags \textbf{BCPO} by 600–900 reward on \texttt{Cheetah/Ant} as the augmented state becomes harder to fit. \textbf{DR} scales reasonably with DoF yet never closes the last 10–20 \% return gap, showing that an implicit policy cannot exploit hidden context even with broad exposure. \textbf{SPaCE/SPDRL} suffers from ``curriculum forgetting '' (further discussed in~\citep{eimer2021self}): once the context sampling distribution shifts, value targets drift and early-stage skills must be reinforced, which explains its limited generality over a wide context set. \textbf{PEARL} ranks mid-pack among all tasks except on \texttt{Cheetah.} \textbf{RR}’s fixed context set cycle keeps the replay buffer perfectly balanced, giving stable gradients for SAC and solid interpolation in the 1-D mass-scale setting, but its coarse grid still caps performance once task DoF or context complexity increases.

\paragraph{Stress test.} To study the limits of the dual observation–policy architecture, we sweep the scaling factor well beyond the train regime and record evaluation returns in Figure~\ref{fig:stress-test}. Across all six tasks, \textbf{BCPO} degrades gracefully: reward declines smoothly rather than collapsing at the train-OOD regime boundary, indicating that the encoder continues to infer even for unseen contexts. Failure modes are rather task-specific: for example, with its three coupled
parameters, \texttt{CartPole} loses balance fastest when the pole is
ultra-light. Interestingly, \texttt{HalfCheetah} forms a textbook bell curve around $\kappa \approx 1.4$. The smooth degradation confirms our information residual analysis: once the encoder achieves observation sufficiency, any remaining sub-optimality stems from classical dynamics and control constraints, not from a failure of the learned representation.

\begin{figure}[ht]
\begin{center}
\centerline{\includegraphics[width=0.97\textwidth]{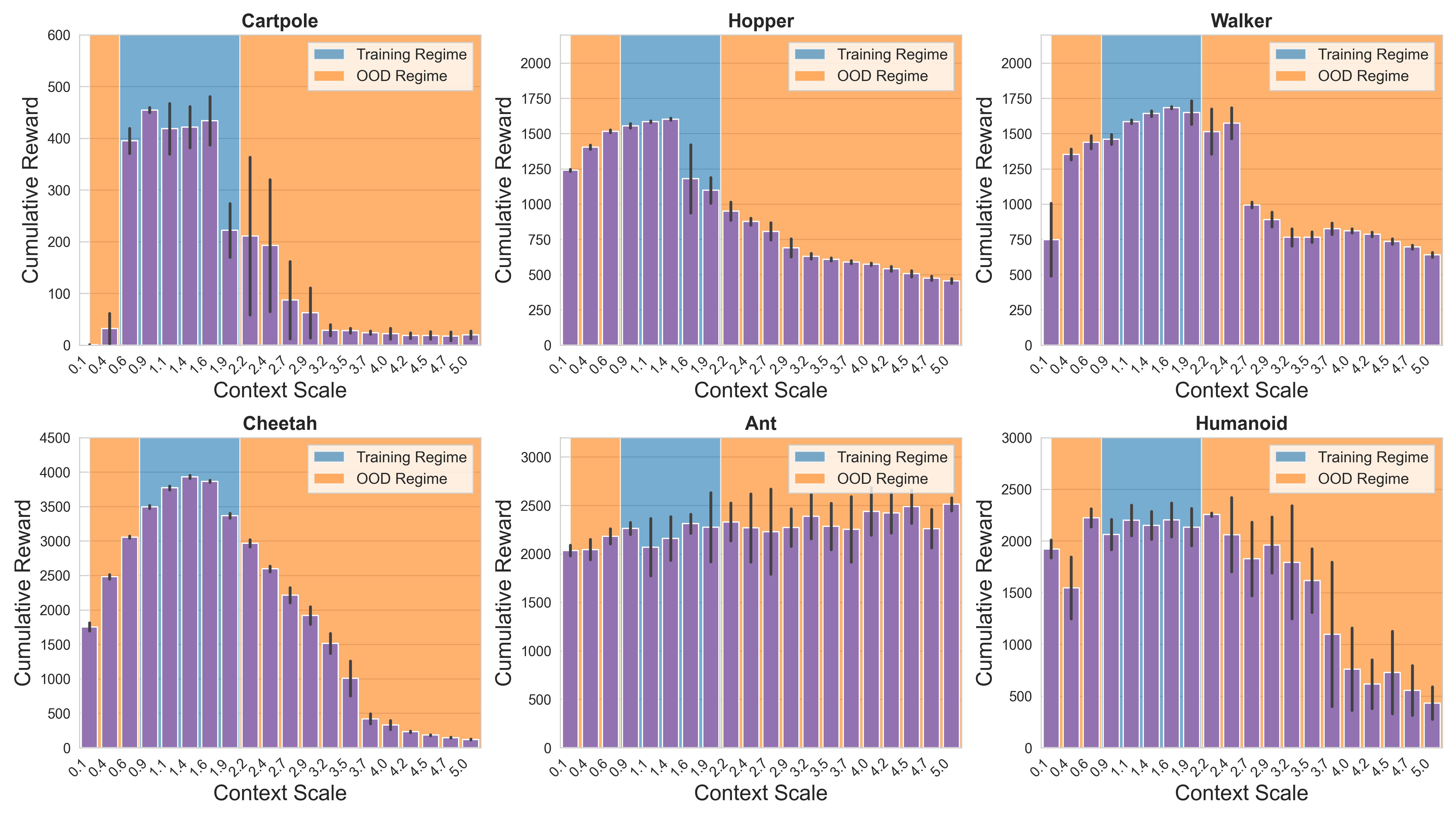}}
\caption{\textbf{Stress test.} Blue shading indicates the \textcolor{Cyan}{training interval} \([0.75,2.0]\), and orange shading the \textcolor{Peach}{OOD regime} [0.1, 5.0].}
\label{fig:stress-test}
\end{center}
\end{figure}

\paragraph{IB Weight ablation.} Figure~\ref{fig:beta_ablation} contrasts three \emph{static} choices of \((0.1,0.5,1.0)\) with three \emph{monotonic} schedules that anneal from a low exploration-friendly value to a high compression one. A small initial $\beta$ (light bottleneck) lets the encoder pass most features, so the policy learns quickly. A larger final $\beta$ is essential for pruning redundancy for robustness and stabilizing long-horizon performance. 

From the \emph{control--sufficiency} standpoint (see Figure~\ref{fig:beta_ablation}), this bottleneck tightening is essential. Early, a context-rich latent code $Z$ enables aggressive exploration, but it retains the value targets at high variance. As clusters sharpen, redundant features are pruned, the critic estimates
stabilize, and the actor can specialize contextual behavior without overfitting. A \emph{fixed low} $\beta$ would keep superfluous details and cap final
return; a \emph{fixed high} $\beta$ would compress too soon and slow the exploration. The annealed schedule thus balances both extremes: expressive codes first, minimal codes later.

From an \emph{observation-sufficiency} standpoint, the latent code $Z$ evolves exactly as predicted by our information residual evidence (see Figure~\ref{fig:vis}). At \textbf{1\,\%} of training, the embeddings are tangled, indicating the encoder has not yet separated the contexts. By \textbf{2--15\,\%}, the contrastive InfoNCE term has driven inter-cluster formation with KL term shrinking the intra-cluster covariances, which indicates that $I(C;Z)$ is rapidly approaching $I(C;O)$. When $\beta$ begins to rise, the compression term dominates: the bottleneck squeezes away residual noise and collapses intra-cluster variance into almost isotropic spheres. The final state has six compact well-separated Gaussians, which is a direct low-dimensional evidence that the encoder now carries the minimal bits required for observation sufficiency.

\begin{figure}[ht]
    \centering
    \subfigure[\textbf{$\beta$ schedules used in the ablation.}\label{fig:beta_sched}]{
        \includegraphics[width=0.46\linewidth]{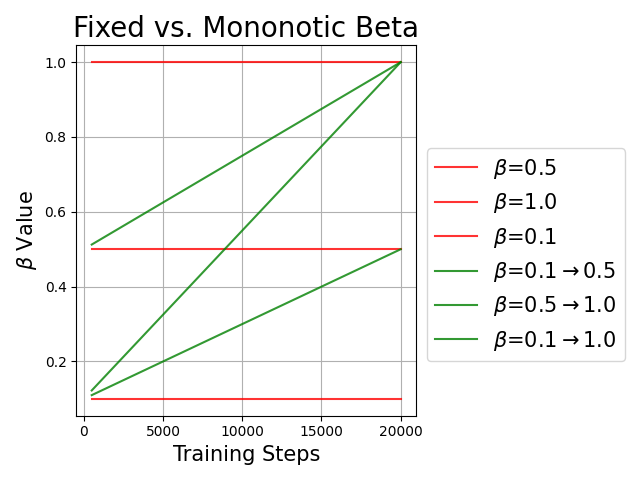}}
    \hfill
    \subfigure[\textbf{Effect of $\beta$ on evaluation return.}\label{fig:beta_return}]{
        \includegraphics[width=0.46\linewidth]{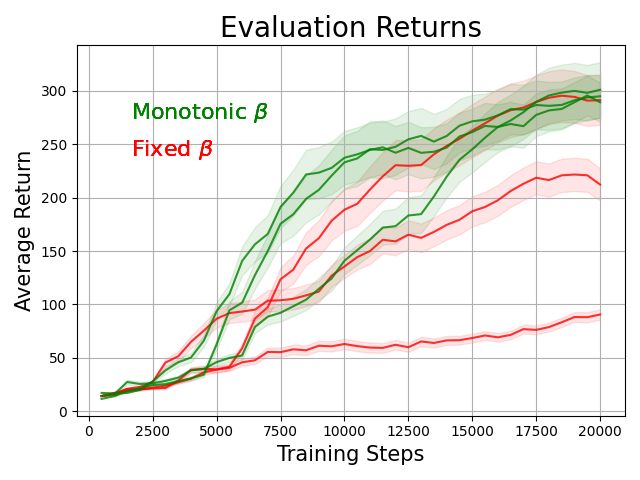}}
    \caption{\textbf{Ablation on the IB weight $\beta$.}  
    \textbf{(a)}~Fixed settings (red) keep $\beta$ constant at \{0.1, 0.5, 1.0\}; 
    monotonic schedules (green) linearly anneal $\beta$ from a low exploration-friendly
    value to a high compression-weighted one.  
    \textbf{(b)}~A small initial $\beta$ accelerates early learning, while a
    larger final $\beta$ is crucial for asymptotic performance.  The best
    trade-off is achieved by an annealed schedule: rich features first for exploration, minimal code later for robustness.  Shaded
    regions are 95\% CIs over three seeds.}
    \label{fig:beta_ablation}
\end{figure}

\begin{figure}[ht]
    \centering
    \subfigure[Progress: 1\%]{%
        \includegraphics[width=0.19\textwidth]{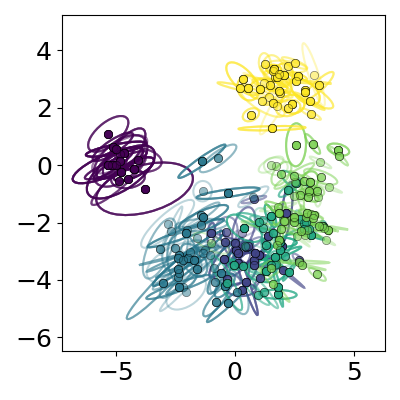}%
    }
    \subfigure[2\%]{%
        \includegraphics[width=0.19\textwidth]{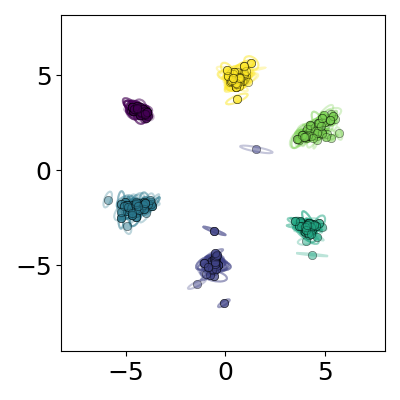}%
    }
    \subfigure[15\%]{%
        \includegraphics[width=0.19\textwidth]{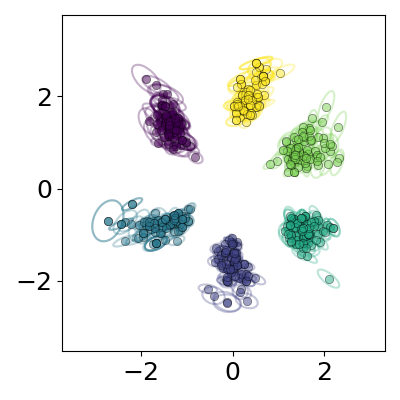}%
    }
    \subfigure[25\%]{%
        \includegraphics[width=0.19\textwidth]{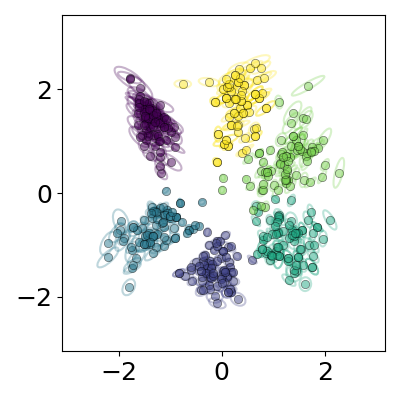}%
    }
    \subfigure[85\%]{%
        \includegraphics[width=0.19\textwidth]{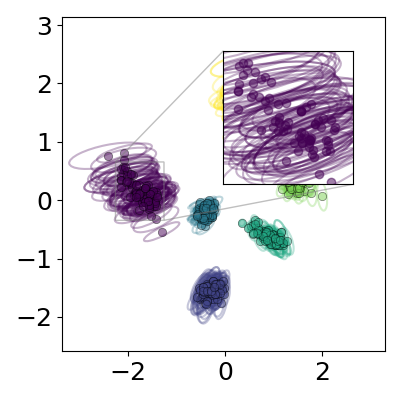}%
    }
    \caption{\textbf{Embedding Visualization.} Evolution of the latent context distribution (dim=2) during \text{BCPO} training on the Cartpole Env (6 different rod lengths). Different colors highlight different contexts. The embedding co-evolves 
    with the policy. The encoder \( \mathcal{N}(z|\mu_{\phi}(o), \Sigma_{\phi}(o)) \) is used 
    with a linearly increasing \(\beta\)-schedule ($0.0001\nearrow0.1$). We plot each embedding’s mean and covariance 
    (shown as ellipses with 95\% confidence intervals). As training progresses, the encoder progressively 
    distills context-relevant features from collected data, forming distinct context clusters and compressing away 
    redundancies (note the scale changes on the axes and distinct \emph{Gaussian-like} clusters at 85\% progress).%
    }
    \label{fig:vis}
\end{figure}

\subsection{Discussion}
\paragraph{Failure modes analysis.} Our error decomposition of information residual pinpoints why some explicit-context baselines succeed in certain regimes yet stall in others.

\textbf{MSE} treats context inference as a \emph{point–estimate}
problem: given a history window \(O\) it emits a single, fully
\textbf{deterministic} guess \(\hat c\) and feeds \((s,\hat c)\) to the
policy.  This is equivalent to assuming the posterior
\(q(c|O)=\delta(c-\hat c)\), i.e.\ the context is noise-free once
\(O\) is observed.  Under our information residual decomposition, such an
assumption declares the \textsc{Encoder Gap} closed as soon as the
\textbf{training} MSE is small.  At test time, however, the same predictor is systematically biased; the policy, co-adapted to \(\hat c\),
receives a mismatched pair \((s,\hat c)\).  State trajectories drift
outside the predictors’ training support, errors compound, and
performance collapses. This reveals that the zero gap is a by-product of the misspecified point-estimate assumption rather than the observation sufficiency.

\textbf{ObsAug} can be read as a ``\textbf{BCPO} without the bottleneck'': it simply feeds the entire window \(O\) to the policy.  
When the state is low-dimensional (e.g., \texttt{CartPole/Hopper}), this shortcut already makes the \textsc{Processing Gap} negligible and nothing
catastrophic happens. But as the task DoF rises, the augmented state becomes high-dimensional: the critic must now use a high-variance value function from a signal filled with redundant context bits. In our decomposition, this redundancy \emph{is} the \textsc{Encoder Gap}: because \textbf{ObsAug} never shrinks it, every replayed batch re-injects noise into the update and magnifies the \textsc{Replay Gap}. Consequently, both sample efficiency and asymptotic return fall behind
\textbf{BCPO} precisely on the high-DoF tasks where explicit information–bottleneck compression matters most.

\textbf{PEARL} appears consistently mid-pack because two opposing
errors nearly cancel. First, \citeauthor{rakelly2019efficient} mitigates replay bias with a ``recent buffer'' heuristic, so the \textsc{Replay Gap} term is small, a fact predicted by Lem.~3.3. Second, \textbf{PEARL} makes a strong \emph{factorization} assumption inside its
data window:
\[
q_{\phi}\!\bigl(z\mid o_{1:k}\bigr)
\;\propto\;
\prod_{i=1}^{k} \Phi_{\phi}\!\bigl(z\mid o_i\bigr),
\]
where $\Phi$ is a Gaussian factor that results in a Gaussian posterior. This collapses temporal correlations and leaves a residual
\textsc{Processing Gap} that \textbf{BCPO} avoids.  
Moreover, the meta-RL objective maximizes \emph{average} return over all
contexts, whereas \textbf{BCPO} learns distinct latent-conditioned behaviors. Thus, \textbf{PEARL} rarely tops the chart but seldom crashes either.

In short, the information residual perspective tells that an explicit context robust policy must keep the \textsc{Processing}, \textsc{Encoder}, \textsc{Replay} gaps simultaneously small. Under point-estimate assumption, MSE reopens the \textsc{Encoder Gap} during testing; \textbf{ObsAug} ignores \textsc{Encoder Gap} and amplifies the \textsc{Replay Gap}; \textbf{PEARL} leaves a residual in the \textsc{Processing Gap} through its factorization assumption. \textbf{BCPO} is the only method that simultaneously and explicitly tackles all, which is precisely why it is fast, stable, and robust across the full spectrum of tasks.

\paragraph{Implicit vs. explicit context policy.} From an algorithm design perspective, our study reaffirms a ``no–free–lunch'' principle for contextual RL. \emph{Implicit} approaches (\textbf{DR, RR, self-paced curriculum}) pay \textbf{zero} on representation/observation and work well when the context manifold is simple, but they hit a performance ceiling once task DoF or context entropy increases: the single policy must
simultaneously \textit{infer} and \textit{control}, which wastes its function approximation budget (e.g., neural networks).  

\emph{Explicit} methods introduce a dedicated context channel, but that channel comes with extra parameters, inference overhead, and the risk of new failure modes. \textbf{BCPO} shows that this complexity pays off \emph{only} when the channel is carefully designed: the data-processing window is informative yet compact, posterior uncertainty is explicitly modeled, and replay drift is controlled. Remove any one of these safeguards, and the added machinery yields diminishing returns, or in the worst case, increases computation without narrowing the performance gap.

\section{Related works}\label{sec:related}

\paragraph{Contextual \& meta RL.}
\emph{Contextual RL} learns a single policy that generalizes across tasks with contextual inputs~\citep{hallak2015contextual,benjaminscontextualize,kupcsik2013data, oord2018representation, lee2020context}. Context-based meta RL methods (e.g.,~\citealp{gao2024context,pong2022offline}) and PEARL in particular~\citep{rakelly2019efficient} infer a latent task embedding from recent/behavioral experience and condition the
policy on that embedding. Pearl explicitly uses replay heuristics to
train off-policy critics.  Conservative Q-learning pushes this replay idea further by penalizing actions outside the behavior support~\citep{kumar2020conservative}. 

\paragraph{Randomization \& curriculum.}
Domain randomization broadens the training context exposure by sampling
diverse environment parameters~\citep{andrychowicz2020learning, margolisyang2022rapid}, and curriculum schedulers progressively expand the task set to stabilize
learning~\citep{mehta2020active,eimer2021self, klink2020self}.

\paragraph{Representation learning, IB, \& information structure.} A growing body of work studies \emph{representation–oriented} RL through the lens of predictive or world models. Many employ
information bottleneck regularization to encourage latent codes that are compact yet decision–relevant, e.g. DeepMDP~\citep{gelada2019deepmdp}, Dreamer~\citep{hafner2023mastering}, stochastic latent actor-critic~\citep{lee2020stochastic}, and many more. Closer to our \emph{control–sufficiency} viewpoint, \citeauthor{huang2022action} explicitly optimizes latent variables that preserve the action–value ordering.

\paragraph{Historical remark.}
For completeness, we relate our framework and algorithm to classical results in stochastic control. The modular structure exploited by \textbf{BCPO}—and, more broadly, by many context–aware RL methods—echoes the classical separation principle: under suitable structural assumptions, the estimation and control sub-problems can be solved independently and then combined without loss of optimality~\citep{wonham1968separation,georgiou2013separation,aastrom2012introduction,fleming2012deterministic}.  
The canonical illustration is the pairing of the Kalman filter and the LQR controller~\citep{kalman1960new}.  
Inspired by the same idea, \textbf{BCPO} explicitly optimizes for a \emph{separable} representation–policy learning problem to promote algorithmic simplicity, sample efficiency, and stronger generalization in data-driven RL settings.

\section{Conclusion and Future Work}\label{sec:future}
This paper presents a unified inference–control perspective on context-based RL with latent, episode-level variations. We formalized the gap between \emph{observation sufficiency}(knowing the latent context) and \emph{control sufficiency} (acting optimally) through an \emph{information residual}, derived a tractable ELBO-style objective whose two terms cleanly separate representation learning (inference) from policy improvement (control), and instantiated the theory in \textbf{BCPO}.

The results presented here aim to serve both theory and practice. For theorists, the framework offers a unified information-theoretic lens that subsumes earlier, disparate notions—such as representation quality and value-function mismatch—to study the inference–control structure in RL. For practitioners, the residual’s explicit decomposition turns abstract information-theoretical terms into concrete diagnostics. Thus, the framework not only clarifies \emph{what} matters in context-based RL, but also \emph{how} to improve existing algorithms through the \textbf{BCPO} procedure.

The present study assumes fixed, episode-level contexts and relies on variational MI estimators.  This leaves several natural extensions. Extending the sufficiency theory to non-stationary and within-episode context drift is non-trivial and may require new definitions. Combining \textbf{BCPO} with latent-variable dynamics/world models could yield planning-aware bottlenecks and a sharper inference–control separation. Additionally, embedding explicit risk or information-budget constraints into \textbf{BCPO} is a promising route toward safe, real-world deployment.

\clearpage
\section*{Acknowledgments}
This work was supported in part by NASA ULI (80NSSC22M0070), Air Force Office of Scientific Research (FA9550-21-1-0411), NSF CPS (2311085), NSF RI (2133656), NSF CMMI (2135925), NASA under the Cooperative Agreement 80NSSC20M0229, and NSF SLES (2331878). Marco Caccamo was supported by an Alexander von Humboldt Professorship endowed by the German Federal Ministry of Education and Research.

\appendix
\section{Variational Information Bottleneck}\label{app:vib}
In this section, we provide a detailed exposition of the \emph{Variational Information Bottleneck} (VIB) framework. Recall the information bottleneck objective~\eqref{eq:ib} in its parametric form:
\begin{equation}
    \min_{\phi}\ \mathcal{L}_{\mathrm{IB}}(\phi) \;=\; -\I_{\phi}(Z; C) \;+\; \beta\ \I_{\phi}(Z;O) ,
\end{equation}
where $\beta > 0$ regulates the trade-off between \emph{relevance} and \emph{compression}. Because direct computation of these mutual information (MI) terms is generally intractable, we adopt variational bounds to derive a practical training objective.

\paragraph{Upper-bounding \(\I_{\phi}(Z; O)\).} 
{
When the conditional distribution $p(z|o)$ is given (i.e., an encoder $p_{\phi}(z|o)$ parametrized by $\phi$), a tractable variational upper bound can be built by introducing variational approximation $r(z)$ to the unknown $p(z)$~\citep{barber2004algorithm, poole2019variational}:
\begin{align}
    \I_{\phi}(Z; O) &\;=\; \mathbb{E}_{p(z,o)} \Bigl[\log \frac{p_{\phi}(z|o)}{p(z)}\Bigr] \notag \\
    &\;=\; \mathbb{E}_{p(z,o)} \Bigl[\log \frac{p_{\phi}(z|o) r(z)}{p(z)r(z)}\Bigr] \notag \\
    &\;=\; \mathbb{E}_{p(z,o)} \Bigl[\log \frac{p_{\phi}(z|o)}{r(z)}\Bigr] - \underbrace{\KL \bigl(p(z)\| r(z)\bigr)}_{\ge 0} \notag \\
    &\;\leq\; \mathbb{E}_{p(o)} \Bigl[ \KL\bigl( p_{\phi}(z|o) \| r(z) \bigr) \notag \Bigr] \\
    &\;\triangleq\; \I_{\text{MIN}}(\phi).
\end{align}
Hence, minimizing $\I_{\text{MIN}}(\phi)$ \emph{upper bounds} the compression term $I_{\phi}(Z;O)$.

In practice, the conditional (encoding) distribution $p_{\phi}(z|o)$ is often chosen to be a Gaussian $\mathcal{N}\!\bigl(z|\mu_{\phi}(o), \Sigma_{\phi}(o)\bigr)$, with neural network parametrization $\mu_{\phi}$ and $\Sigma_{\phi}$. In this case, the KL term admits a closed-form expression and can be optimized via the standard re-parameterization trick~\citep{kingma2013auto}.
}

\paragraph{Lower-bounding \(\I_{\phi}(Z; C)\).} 
A variety of variational lower bounds \citep{barber2004algorithm, oord2018representation, poole2019variational} and mutual-information estimators \citep{belghazi2018mine} can be categorized according to whether they explicitly model the conditional distributions $p(c|z)$(or $p(z|c)$). We broadly group these approaches into \emph{explicit} (supervised) and \emph{implicit} (unsupervised or semi-supervised) settings:

\begin{itemize}
        \item \textbf{Explicit (Supervised).} 
        If the context $C$ is observed as labels (e.g., one-hot label, or trajectory index), one can introduce a variational \emph{decoder} $q_{\phi}(c|z)$ to recover the labels from $Z$. In this scenario, we can invoke the classical \emph{Barber--Agakov} bound \citep{barber2004algorithm} on mutual information, which follows from the non-negativity of the KL divergence:
        \begin{equation}
        \label{eq:BA_bound}
            I_{\phi}(Z;C) 
            \;\ge\; 
            \underbrace{
              \mathbb{E}_{p(z,c)}
                \bigl[\log q_{\phi}(c|z)\bigr]
            }_{\displaystyle \text{label reconstruction}}
            \;+\;
            \underbrace{H(C)}_{\displaystyle \text{label entropy}}
            \;\triangleq\; \I^e_{\text{MAX}}(\phi)
        \end{equation}
        The term \(H(C)\) is independent of the decoder and thus cannot be optimized with respect to 
        \(q_{\phi}\). Consequently, the optimization boils down to maximizing
        \begin{equation*}
          \mathbb{E}_{p(z,c)}
                \bigl[\log q_{\phi}(c|z)\bigr]
          \;=\;
          \mathbb{E}_{p(z)}
            \Bigl[ \underbrace{\mathbb{E}_{p(c|z)} \bigl[
              \log q_{\phi}(c|z) \bigr]}_{\displaystyle -H(p,q_{\phi})}
            \Bigr],
        \end{equation*}
        which corresponds to a standard cross-entropy (label reconstruction) objective. Combining this 
        variational bound \(\I_{\mathrm{MAX}}^{e}(\phi)\) with \(\I_{\mathrm{MIN}}(\phi)\) yields a 
        tractable information bottleneck objective, essentially the VIB formulation from 
        ~\citep{alemi2016deep}.
            
        \item \textbf{Implicit (Semi-supervised).} In many practical scenarios, \(C\) may be partially labeled or unobservable. Nonetheless, observations \(\boldsymbol{o}_{1:K} \sim p(o_{1:K}|\boldsymbol{c}_i)\) from an underlying \(c_i\) share similar structures, and thus the corresponding representations \(\boldsymbol{z}_{1:K} \sim p(z_{1:K}|o_{1:K},\boldsymbol{c}_i)\) reflect this shared structure. By grouping similar representations while separating dissimilar ones, we effectively maximize the mutual information between \(Z\) and \(C\)  by clustering. This can be achieved by optimizing InfoNCE, a contrastive loss that encourages the model to distinguish between positive (similar) and negative (dissimilar) pairs \cite{oord2018representation}. Here, \(\boldsymbol{c}_i\) can be used as \emph{pseudo-labels} to facilitate contrastive sampling.

         Concretely, suppose we draw a set of \(Z^K = \{\mathbf{z}^{\text{pos}}, \mathbf{z}^{\text{neg}}_1, \dots, \mathbf{z}^{\text{neg}}_{K-1}\}\), where \(\mathbf{z}^{\text{pos}}\) is an embedding sample from $p(z|o, \boldsymbol{c}_i)$ (the positive), and \(\{\mathbf{z}^{\text{neg}}_j\}\) are embedding samples from $p(z|o,\boldsymbol{c}\neq \boldsymbol{c}_i)$. The InfoNCE objective is:
        \begin{equation*}
            \label{eq:infonce}
            \mathcal{L}_{\text{InfoNCE}}(\psi) 
            = 
            -\mathbb{E}_{Z^K}\Biggl[
                \log \frac{f_{\psi}\bigl(\mathbf{z}^{\text{pos}}, \mathbf{z}_i\bigr)}
                          {\sum_{z_j \in Z^K} f_{\psi}\bigl(\mathbf{z}^{\text{pos}}, \mathbf{z}_j\bigr)}
            \Biggr],
        \end{equation*}
        where \(f_{\psi}(\cdot,\cdot)\) is a learnable critic function (e.g., a bilinear function 
        \(f_{\psi}(z,z')=z^T W_{\psi}z'\)) that scores the similarity between samples. 
        As a categorical cross-entropy loss, \(\mathcal{L}_{\text{InfoNCE}}\) effectively classifies 
        the positive sample \(\mathbf{z}^{\text{pos}}\) against \(K-1\) negatives. Combining the InfoNCE loss with the variational bound \(\I_{\mathrm{MAX}}^{e}(\phi)\) 
        results in a tractable \textbf{implicit} VIB objective. 
        
        Note that while the explicit VIB objective is strictly a variational lower bound of the original IB 
        objective~\cite{alemi2016deep}, this implicit IB objective is rather an approximation. Although 
        the InfoNCE loss is known to be a variational lower bound on mutual information~\cite{poole2019variational}, 
        we use it here primarily as a contrastive loss to softly partition the latent embeddings \(Z\) 
        into a mixture of distributions (i.e., cluster embeddings from the same context). Once \(p(z)\) 
        evolves into a mixture distribution, it aligns with the solution structure of IB (see the self-consistent equations of IB solution in~\cite{tishby2000information, slonim1999agglomerative}).
    \end{itemize}
By combining these bounds, we obtain a tractable \emph{Variational IB Loss}. While the supervised approach often offers computational efficiency, the implicit one yields more robust and generalizable representations, particularly in settings where labeled data is limited but there is a large pool of observations (as is often the case in RL).

\section{Full Experimental Results}\label{appendix:exp_setup}
In this section, we provide implementation details and present the complete experimental results. The source code is available at: \url{https://github.com/HP-CAO/physics-rl}.

\subsection{Environment Setup}
We summarize the key parameters for the five environments in Table~\ref{tab:env_setup}. For further details, including task definitions, reward function design, and termination conditions, please refer to~\citep{cao2022cloud} for the Cartpole environment and~\citep{mujoco} for the five MuJoCo environments.

\begin{figure}[h]
\begin{center}
\centerline{\includegraphics[width=0.98\textwidth]{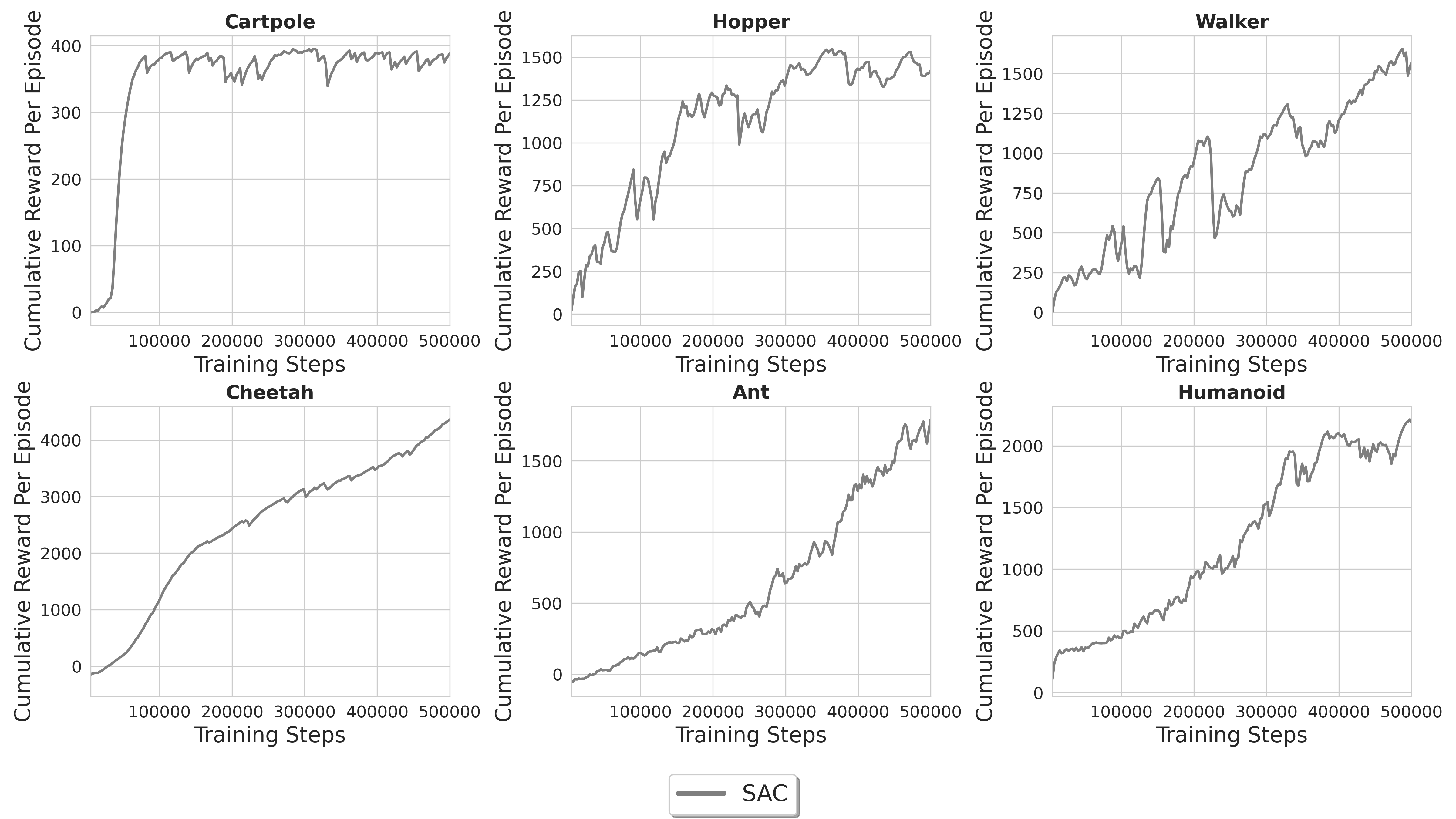}}
\caption{\textbf{Baseline Training Performance.} Returns vs. Training steps. Trained on default environments.}
\label{fig:sac_baseline}
\end{center}
\end{figure}

\begin{table}[ht]
    \centering
    \renewcommand{\arraystretch}{1.2}
    \caption{Observation and Action Space Dimensions for MuJoCo Environments}
    \label{tab:env_setup}
    \begin{tabular}{lcccc}
        \hline
        \textbf{Environment} & \textbf{State Dim.} & \textbf{Action Dim.} & \textbf{C. Dim. } & \textbf{Policy Hidden Layers} \\
        \hline
        Cartpole & 5 & 1 & 3 & [256, 256]  \\
        Hopper & 11 & 3 & 1 & [256, 256]  \\
        Walker2d & 17 & 6 & 1 & [256, 256] \\
        HalfCheetah & 17 & 6 & 1& [256, 256] \\
        Ant & 105 & 8 & 1 & [256, 256, 256]  \\
        Humanoid & 348 & 17 & 1 & [256, 256, 256] \\ 
        \hline
    \end{tabular}
\end{table}

\subsection{Baseline and Hyperparameters}
Our implementation of Soft Actor-Critic follows~\citep{pmlr-v80-haarnoja18b} and~\citep{fujimoto2018addressing}, with default parameters summarized in Table~\ref{tab:sac_hyperparams}. These parameters are shared across all baseline algorithms unless specified otherwise. The baseline training performance on the default environments is presented in Fig.~\ref{fig:sac_baseline}.

\begin{table}[ht]
    \centering
    \renewcommand{\arraystretch}{1.2}
    \caption{Parameter setting for Soft Actor-Critic (SAC)}
    \label{tab:sac_hyperparams}
    \begin{tabular}{l c}
        \hline
        \textbf{Hyperparameter} & \textbf{Value} \\
        \hline
        Discount factor ($\gamma$) & 0.99 \\
        Learning rate (actor, critic) & $3 \times 10^{-4}$ \\
        Optimizer & Adam \\
        Replay buffer size & $5*10^5$ \\
        Batch size & 256 \\
        Target smoothing coefficient ($\tau$) & 0.005 \\
        Entropy coefficient & 0.1 \\
        Target update interval & 1 \\
        Activation function & ReLU \\   
        Training steps per environment step & 1 \\
        Warm-up steps (without updates) & 3000 \\
        Total training steps & $5*10^5$ \\
        Maximum steps per episode & 500
        \\
        Evaluation period (steps) & 5000 \\
        Number of rollouts for evaluation & 50 \\
        \hline
    \end{tabular}
\end{table}

The parameters for the proposed \textbf{BCPO} algorithm are summarized in Table~\ref{tab:ib_parameter}. \textbf{MSE} and encoders of \textbf{PEARL} and \textbf{BCPO} are sharing the same backbone.

\begin{table}[h]
    \centering
    \renewcommand{\arraystretch}{1.2}
    \caption{BCPO Parameters}
    \label{tab:ib_parameter}
    \resizebox{\textwidth}{!}{
    \begin{tabular}{lcccccc}
        \hline
        \textbf{Parameter} & \textbf{Cartpole} & \textbf{Hopper} & \textbf{Walker2d} & \textbf{Cheetah} & \textbf{Ant} & \textbf{Humanoid} \\
        \hline
        Batch size  & 128 & 128 & 128 & 128 & 128 & 128 \\
        IB embedding dim & 2 & 4 & 4 & 4 & 15 & 30 \\
        IB encoder hidden units & [256, 256, 64] & [512, 512, 128] & [512, 512, 128] & [512, 512, 128] & [512, 512, 128] & [512, 512, 128] \\
        IB encoder activation  & GeLU & GeLU & GeLU & GeLU & GeLU  & GeLU \\
        IB encoder layer norm & True & True & True & True & True & True \\
        $\beta$ & 1.0e-04 $\nearrow$ 0.1 & 1.0e-04$\nearrow$ 0.1 & 1.0e-04$\nearrow$ 0.1 & 1.0e-04$\nearrow$ 0.1 & 1.0e-04 $\nearrow$ 0.1 & 1.0e-04 $\nearrow$ 0.1\\
        Number of context & 8 & 4 & 4 & 4 & 4 & 4 \\
        Number of negatives (InfoNCE) & 7 & 3 & 3 & 3 & 3 & 3\\
        Critic (InfoNCE) & Bilinear & Bilinear& Bilinear& Bilinear& Bilinear & Bilinear\\
        Prior $r(z)$ & \(\mathcal{N}(\mathbf{0}, \mathbf{I})\)  & \(\mathcal{N}(\mathbf{0}, \mathbf{I})\) & \(\mathcal{N}(\mathbf{0}, \mathbf{I})\) & \(\mathcal{N}(\mathbf{0}, \mathbf{I})\) & \(\mathcal{N}(\mathbf{0}, \mathbf{I})\) & \(\mathcal{N}(\mathbf{0}, \mathbf{I})\) \\
        \hline
    \end{tabular}
    }
\end{table}

\newpage
\subsection{More Experimental Results}
\begin{figure*}[h]
\vskip 0.2in
\begin{center}
\centerline{\includegraphics[width=0.9\textwidth]{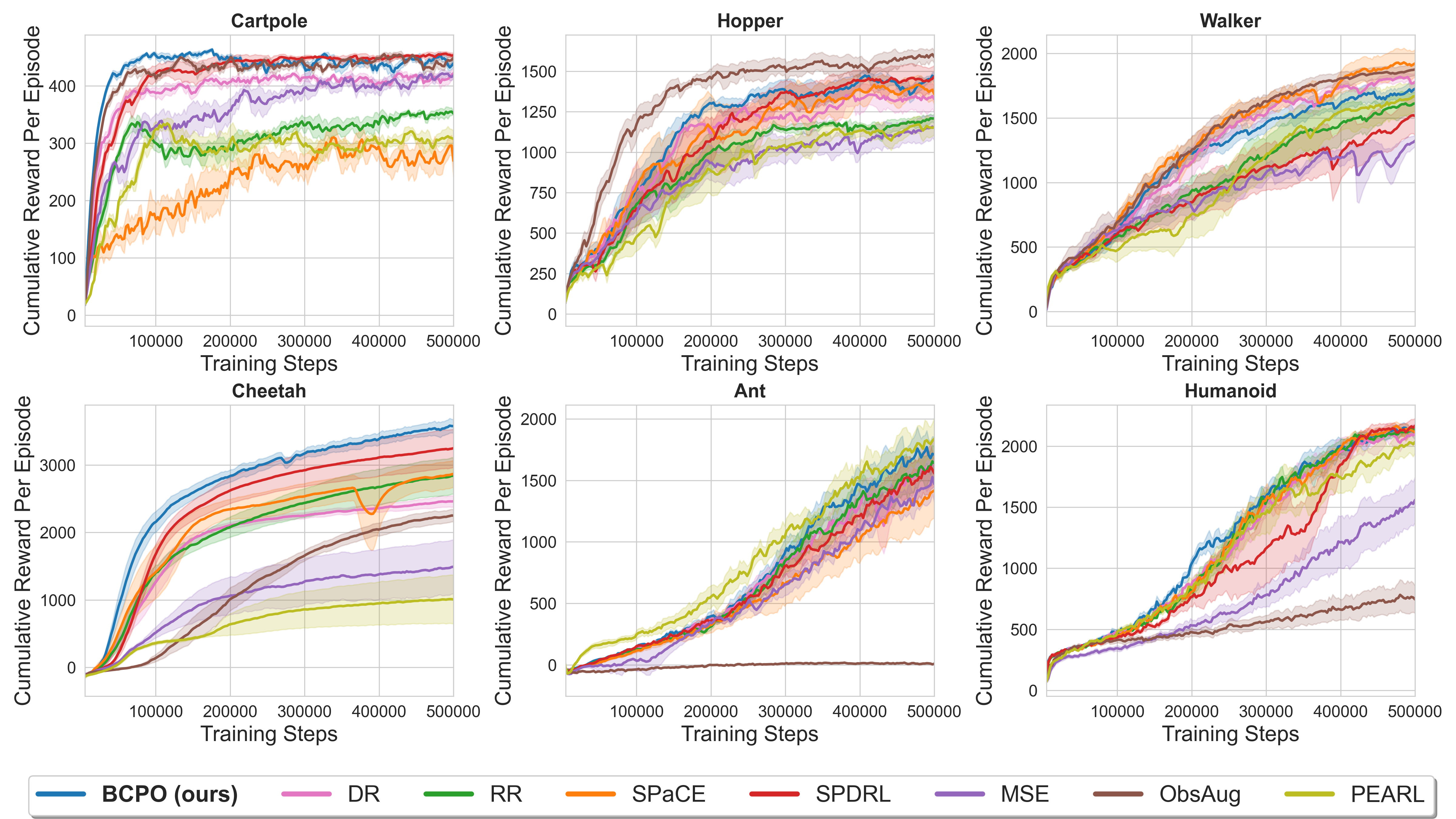}}
\caption{\textbf{Training Performance.} Returns vs. Training steps. All methods are evaluated on training contexts every 5000 training steps. Results averaged over 5 seeds.}
\label{fig:training_result_plot}
\end{center}
\vskip -0.2in
\end{figure*}


\begin{table}[H]
\caption{Training Performance Across Environments and Context Spaces}
\label{tab:training-results}
\centering
\resizebox{\textwidth}{!}{
\begin{tabular}{@{}lccccccl@{\hskip 0.75em}c@{}}
\toprule
\multirow{2}{*}{Method} & \multicolumn{6}{c}{Average Return} & \multicolumn{2}{c}{Training Context} \\
\cmidrule(lr){2-7} \cmidrule(lr){8-9}
 & CartPole & Hopper & Walker & Cheetah & Ant & Humanoid & Cont. & Disc. \\
\midrule
BCPO \textbf{(ours)} & 432.42 $\pm$ 0.29 & 1470.18 $\pm$ 0.14 & 1739.81 $\pm$ 0.16 & 3594.81 $\pm$ 0.09 & 1739.75 $\pm$ 0.41 & 2108.11 $\pm$ 0.16 &  & $\checkmark$ \\
DR & 429.17 $\pm$ 0.13 & 1331.84 $\pm$ 0.23 & 1741.38 $\pm$ 0.24 & 2467.62 $\pm$ 0.08 & 1688.53 $\pm$ 0.42 & 2129.27 $\pm$ 0.18 & $\checkmark$ & \\
RR & 353.18 $\pm$ 0.37 & 1216.75 $\pm$ 0.24 & 1621.16 $\pm$ 0.18 & 2857.3 $\pm$ 0.19 & 1665.98 $\pm$ 0.45 & 2161.46 $\pm$ 0.16 &  & $\checkmark$\\
SPaCE & 260.09 $\pm$ 0.75 & 1359.44 $\pm$ 0.21 & 1928.08 $\pm$ 0.15 & 2878.14 $\pm$ 0.14 & 1497.96 $\pm$ 0.45 & 2102.5 $\pm$ 0.16 &  &$\checkmark$ \\
SPDRL & 451.18 $\pm$ 0.05 & 1484.02 $\pm$ 0.13 & 1529.02 $\pm$ 0.22 & 3251.23 $\pm$ 0.18 & 1576.71 $\pm$ 0.45 & 2200.98 $\pm$ 0.13 & $\checkmark$ & \\
MSE & 410.95 $\pm$ 0.3 & 1167.83 $\pm$ 0.25 & 1346.4 $\pm$ 0.1 & 1505.33 $\pm$ 0.55 & 1525.63 $\pm$ 0.42 & 1601.91 $\pm$ 0.34 & $\checkmark$ & \\
ObsAug & 454.38 $\pm$ 0.18 & 1587.67 $\pm$ 0.16 & 1887.55 $\pm$ 0.05 & 2266.09 $\pm$ 0.09 & 7.93 $\pm$ 6.18 & 726.1 $\pm$ 0.41 & $\checkmark$ & \\
PEARL & 313.22 $\pm$ 0.47 & 1160.73 $\pm$ 0.28 & 1687.12 $\pm$ 0.18 & 1012.33 $\pm$ 0.72 & 1814.71 $\pm$ 0.37 & 2025.95 $\pm$ 0.19 &  & $\checkmark$\\
\bottomrule
\end{tabular}
}
\vspace{0.5em}
\footnotesize
\begin{tabular}{@{}p{\textwidth}@{}}
Training Context: $\checkmark$ under \textit{Cont.} = Continuous parameter training, $\checkmark$ under \textit{Disc.} = Discrete instance training ($|\mathcal{C}|=8$ for CartPole, $4$ for others). Metrics averaged over 5 seeds (mean ± coefficient of variation). N/A indicates failure during training. \\
\end{tabular}
\end{table}

\clearpage
\vskip 0.2in
\bibliography{sample}

\end{document}